\def\BibTeX{{\rm B\kern-.05em{\sc i\kern-.025em b}\kern-.08em
T\kern-.1667em\lower.7ex\hbox{E}\kern-.125emX}}
\def\b#1{\mathbf{#1}}
\def\t#1{\textbf{#1}}
\def\r#1{\mathrm{#1}}
\def\trans{^{\r{T}}}
\def\eqta{\begin{equation}}
\def\eqtb{\end{equation}}
\def\alna{\begin{aligned}}
\def\alnb{\end{aligned}}
\def\arra{\begin{array}}
\def\arrb{\end{array}}
\begin{document}

%header
\markboth{Journal of \LaTeX\ Class Files,~Vol.~18, No.~9, September~2020} %%
{How to Use the IEEEtran \LaTeX \ Templates}

\title{Fast Semi-supervised Learning on Large Graphs: An Improved Green-function Method}
\author{Feiping Nie, \IEEEmembership{Senior Member, IEEE}, Yitao Song, Wei Chang, Rong Wang, and Xuelong Li, \IEEEmembership{Fellow, IEEE}

\thanks{The authors are are with the School of Artificial Intelligence, OPtics and ElectroNics (iOPEN), School of Computer Science, Northwestern Polytechnical University, Xi’an 710072, P.R. China, and also with the Key Laboratory of Intelligent Interaction and Applications (Northwestern Polytechnical University), Ministry of Industry and Information Technology, Xi’an 710072, P.R. China. (email: feipingnie@gmail.com; tombacsong@outlook.com; hsomewei@gmail.com; wangrong07@tsinghua.org.cn; li@nwpu.edu.cn).}}

\markboth{Journal of \LaTeX\ Class Files,~Vol.~18, No.~9, September~2021}%
{How to Use the IEEEtran \LaTeX \ Templates}

%\twocolumn[
%#\begin{@twocolumnfalse}
\maketitle % important!
%\end{@twocolumnfalse}

%abstract
\begin{abstract}
In the graph-based semi-supervised learning, the Green-function method is a classical method that works by computing the Green’s function in the graph space. However, when applied to large graphs, especially those sparse ones, this method performs unstably and unsatisfactorily. We make a detailed analysis on it and propose a novel method from the perspective of optimization. On fully connected graphs, the method is equivalent to the Green-function method and can be seen as another interpretation with physical meanings, while on non-fully connected graphs, it helps to explain why the Green-function method causes a mess on large sparse graphs. To solve this dilemma, we propose a workable approach to improve our proposed method. Unlike the original method, our improved method can also apply two accelerating techniques, Gaussian Elimination, and Anchored Graphs to become more efficient on large graphs. Finally, the extensive experiments prove our conclusions and the efficiency, accuracy, and stability of our improved Green’s function method.
\end{abstract}

\begin{IEEEkeywords}
Transductive Learning, Graph-based Semi-supervised Learning, Graph Theory, Green’s Function, Laplacian Matrix, Anchored Graph.
\end{IEEEkeywords}
%]

\maketitle

%Introduction
\section{Introduction}
\IEEEPARstart{A}{s} more and more information is generated and collected, more and more data needs to be classified and is utilized in various tasks. Machine learning models bring convenience to us, but they usually require labels to deal with the data for higher performance. Although the volume of data is rapidly growing, it’s hard for labeled data to grow at the same pace\cite{Gao2016SemiSupervisedSR}. The economic and human costs of data labeling are too high, which leads to many classification problems with only a few samples labeled and most samples unlabeled.
Semi-supervised learning\cite{Nie2010FlexibleME} addresses this problem by trying to classify samples with a small number of labeled ones and a large number of unlabeled ones. Semi-supervised learning still classifies under supervision but learns from unsupervised learning methods. Like supervised learning, semi-supervised learning utilizes the label information of labeled samples and hopes that the classification results of labeled ones are consistent with their labels. The difference is a basic assumption named smoothness or clustering that is popular both in semi-supervised and unsupervised learning: The samples that are close to each other are more likely to share a label\cite{Chapelle2002ClusterKF, Zhou2003LearningWL, Zhou2018ABI}. With this assumption, semi-supervised learning can utilize the location of unlabeled data except for those of labeled data and achieve results from the positional relationship between them.

As a part of semi-supervised learning, transductive learning\cite{xiaojin2002learning, Wang2006LabelPT} involves all the test data as unlabeled samples and takes advantage of their features. It is believed that taking all classifying targets into account results in better performance and less run time of the algorithm, which has been verified in methods such as learning with local and global consistency(LLGC)\cite{Zhou2003LearningWL, gu2014combining}, Gaussian fields and harmonic function(HF)\cite{Zhu2003SemiSupervisedLU}, the recent special label propagation(SLP)\cite{Nie2010AGG}, etc. Because of the known distribution of all the training and testing data, graph theory measures the relationship between any two samples in the sample space which is a reproducing kernel Hilbert space\cite{belkin2006manifold}. Graph-based semi-supervised learning (GSSL) methods\cite{Nie2010AGG, Zhu2003SemiSupervisedLU, Qiu2019AcceleratingFM} describe the positional relationship between samples as a graph\cite{Wang2006LabelPT, blum2001learning}. A graph in graph theory consists of points of samples connected by edges and can be described as a similarity matrix. In most cases, similarities are computed by $\epsilon$-neighborhood\cite{belkin2003laplacian}, local linear representation\cite{Wang2006LabelPT, saul2003think}, heat kernel\cite{belkin2004semi}, Gaussian kernel\cite{wang2017fast} or other metrics in the space. The Laplacian matrix also called the graph Laplacian, can be viewed as a matrix form of the discrete Laplace operator on a graph\cite{belkin2008towards}. The matrix plays an important role in GSSL. Many useful properties of the graph can be deduced by analyzing the Laplacian matrix. The Green-function method\cite{Ding2007ALF} is a good example and is proposed by designing Green’s function on a graph which can be seen as the inverse of the Laplace operator.

However, the explanation of the Green-function method is given as a whole and lacks physical meanings, which is widely different from other GSSL methods. Besides, the Green-function method performs badly on non-fully connected graphs and thus is hard to apply to large sparse graphs. In view of the above problems, we deduce a GSSL method from an optimization problem and prove its equivalence to the Green-function method on fully connected graphs, which can be seen as another interpretation. Through the proofs, we deduce the physical meaning of it. Then, we find the undesirable reason for using it on a non-fully connected graph and propose a workable approach to improve. To apply the Green-function method on large graphs, two accelerating techniques, Gauss Elimination, and Anchored Graphs are introduced to ease the high time and space complexity. Our main contributions in this paper are listed as follows:

\setlength{\hangindent}{2.8em}
\setlength{\hangafter}{1}
1)\,\, First, we give a novel interpretation of the Green-function method on fully connected graphs. Through the proofs, we deduce the physical meaning. Second, we analyze why we shouldn't use it on a non-fully connected graph, and then use the perturbation strategy to improve it. During the argumentation, several interesting conclusions are drawn.

\setlength{\hangindent}{2.8em}
\setlength{\hangafter}{1}
2)\,\, We introduce two accelerating techniques, Gauss Elimination, and Anchored Graphs into our improved Green-function method for large graphs. The method using Gauss Elimination has a smaller coefficient of complexity but has the same result. The Green-function method on Anchored Graphs performs better in most cases and has the time complexity of $O(nm^2)$ and space complexity of $O(nm)$, where $n$ denotes the number of samples and $m$ denotes the number of anchor points.

\setlength{\hangindent}{2.8em}
\setlength{\hangafter}{1}
3)\,\,\, Various experiments are conducted to prove our conclusions, and they validate the good performance and high efficiency of our proposed methods compared with the original Green-function method and others.

Fig. \ref{fig:Contri} is a schematic diagram of our contributions.
GSSL, GF, and LLGC are introduced in Section \ref{sec:relate}.
Theorem 1\&2 and Conclusion 1-3 are drawn in Section \ref{sec:method}.
In Section \ref{sec:acclr}, we give out the procedures of two accelerating techniques in detail.
In Section \ref{sec:Compare}, we discuss the relationship between our proposed method and LLGC.

\begin{figure}[h]
\centering
\includegraphics[width=3.5in]{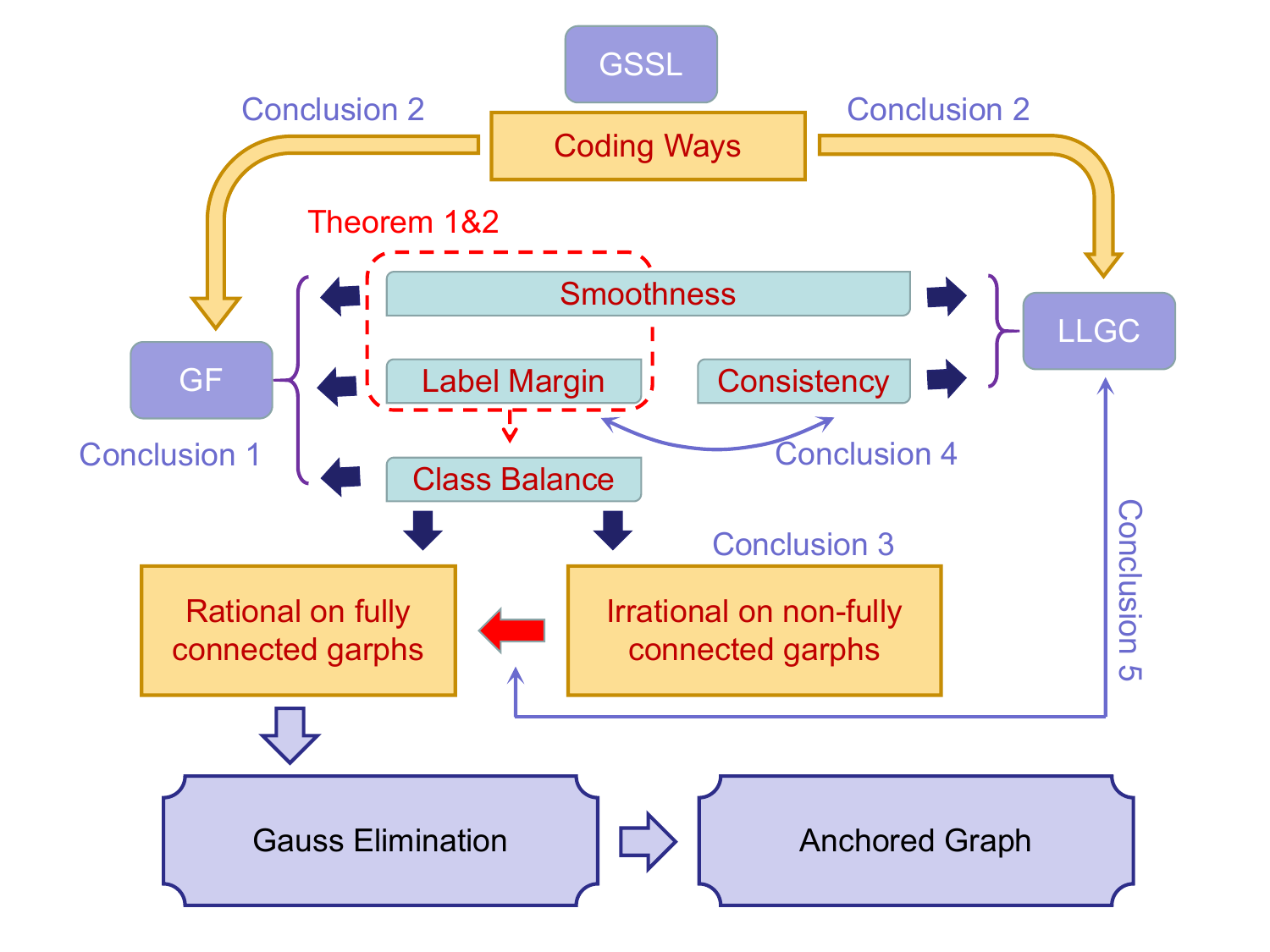}
\caption{The schematic diagram of the paper.}
\label{fig:Contri}
\end{figure}

%Related Work
\section{Related Work} \label{sec:relate}
%Formal Definition
\subsection{Overview of Graph-based Semi-supervised Learning}
\noindent Graph-based Semi-supervised learning is to classify all the samples when a few are labeled while more samples’ labels are unknown. Usually, we have the features of all samples so as to calculate the similarity between any two of them.

Assume that the first $l$ samples of the given data set $\{\b x_i\}_{i=1}^n$ are labeled into several classes, and their labels form a set $\{y_i\}_{i=1}^l$ where $\b x_i \in \mathcal X$ denotes the $i$-th sample, $y_i \in \mathbb Z_+$ denotes which class the $i$-th sample is in, and $n$ denotes the amount of all.
For the sake of calculation, we use a matrix $\b Y \in \mathbb R^{n \times c} $ to describe the situation of labels where $c$ means the number of classes. We could design two kinds of label matrixes $\b Y^{(1)}$ and $\b Y^{(2)}$ which differ on the labeled negative samples.

For each class, if we set the value of labeled negative samples as $-1$, the element of the label matrix $\b Y^{(1)}$ can be:
\eqta\label{eq:y1}
Y^{(1)}_{ik} = \left\{
\alna
+1&, &&i \le l, & k =y_i \\
-1&, &&i \le l, & k \neq y_i \\
0&, &&l < i \le n&
\alnb
\right.,
\eqtb
where $Y^{(1)}_{ik}$ means the element of $\b Y^{(1)}$ in the $i$-th row, the $j$-th column.

Or like the one-hot code, the value of labeled negative samples is $0$. We get
\eqta\label{eq:y2}
Y^{(2)}_{ik} = \left\{
\alna
1 &,&&i \le l \;\r{and}\; k = y_i\\
0 &,&&\r{otherwise}&
\alnb
\right..
\eqtb

Then we get two kinds of label matrix $\b Y = (\b y_1, \b y_2...,\b y_n) \trans \in \mathbb R ^{n \times c}$ where the elements in the $(l+1)$-st to $n$-th rows of $\b Y$ is all zeros, and the 1st to $l$-th rows of $\b Y$ describe labeled samples in two different coding ways. In this paper, we mainly use $\b Y^{(1)}$ as the label matrix $\b Y$, and in Section \ref{sec:Equivalence}, the equivalence between the two coding ways in our method is proved.

Supposing the sample space $\mathcal X$ is a metric space, thus similarities between every two samples can be measured. Define the similarity matrix $\b S \in [0,1] ^ {n \times n}$ where its element $S_{ij} \in [0, 1]$ describes the similarity between $\b x_i$ and $\b x_j$. $S_{ij} = 0$ means the unrelated pair, while $S_{ij} = 1$ means the exactly alike pair. Due to the symmetry of the relationship, $\b S$ is a symmetric matrix, and $S_{ij} = S_{ji}$. The degree vector $\b d$ describes the importance of each point on the graph and satisfies $\b{d} = \b S \b 1_n$, where the 1-vector with $n$ elements is $\b 1_n =(1,...,1)\trans$. We denote the diagonal matrix $\b D = \r{Diag} (\b d)$ as the degree matrix.

Solving a transductive learning problem means finding a soft label matrix $\b F = (\b f_1, \b f_1...,\b f_n) \trans \in \mathbb{R} ^ {n \times c}$, where the $j$-th element of column vector $\b f_i \in \mathbb{R}^c$ measures the tendency that the $i$-th sample is in the $j$-th class. Then, the predicted label of $i$-th sample will be
\eqta\label{eq:y_result}
\hat {y_i} = \arg \max_j (\b f_i)_j .
\eqtb

%Green-function Method
\subsection{Green-function Method} \label{sec:GF}
\noindent C. Ding \textit{et al.} \cite{Ding2007ALF} proposed the Green-function method by an analogy of Laplace operator $\mathcal L$ and Green’s function $\mathcal G(\b x, \b x_0)$ between Euclidean space and the graph. In Euclidean space, the Laplace operator of the function $f(\b x)$ can be written as
$$
\mathcal L f(\b x) = \nabla ^2 f(\b x) = \left( \sum \frac{\partial ^2}{\partial x_i ^2} \right) f(\b x).
$$

With Green’s Function $\mathcal G(\b x, \b x_0)$ in Euclidean space\cite{greenberg2015applications}, a linear and inhomogeneous equation like $\mathcal L f(\b x) = y(\b x)$ can be solved as
$$
f(\b x) = \mathcal L ^ {-1} y(\b x) \equiv \int_\Omega \mathcal G(\b x - \b x_0) y(\b x_0) d \b x_0 .
$$

It is well-known that Green’s function $\mathcal G(\b x, \b x_0)$ satisfies
$$
\mathcal L_{\b x} \mathcal G(\b x, \b x_0) = \delta(\b x - \b x_0),
$$
where $\mathcal L_{\b x}$ denotes the Laplace operator acting on $\b x$ and $\delta(\b x - \b x_0)$ denotes the Impulse function. These properties make Green’s function an important tool in solving the initial value or boundary condition problems of inhomogeneous differential equations.

When it comes to a graph, the function $f(x)$ is expressed as a vector $\b f$ and the Laplace operator performs as the left multiplying by the Laplacian matrix $\b L = \b D - \b S$, that is, $\mathcal L \b f = \b L \b f$.
Bearing the property of the Laplace operator, there are $\b L \b f = \b y$ and $\b f = \b G \b y$. The Green’s function $\b G$ is expected to satisfy $\b L \b G = \b I$ where $\b I$ denotes the identity matrix.

However, the singularity of $\b L$ means the in-existence of the inverse matrix. To bypass the trouble, the Green-function method discards the zero-mode of $\b L$. It is easy to construct a set of orthogonal eigenvectors $\{ \b u_1, \b u_2, ..., \b u_n \}$ of $\b L$ and their corresponding eigenvalues $\{ \sigma_1, \sigma_2, ..., \sigma_n \}$ where $\sigma_1 \leq \sigma_2 \leq ... \leq \sigma_n$ and norm of each eigenvector equals to 1:
\eqta\label{eq:L_svd}
\alna
&\b L \b u_k = \sigma_k \b u_k, ||\b u_k||_2^2 = 1, k=1,2,...,n\\
&\b u_p \trans \b u_q = 0, p \neq q, p,q=1,2,...,n
\alnb
\eqtb

In a fully connected graph, we get $0 = \sigma_1 < \sigma_2 \leq ... \leq \sigma_n$ according to the theory in \cite{spielman2012spectral}. With zero-mode discarded, it can be obtained that
$$
\b G = \b L^{\dagger} = \sum\limits_{i = 2}^n \frac{\b u_i {\b u_i}\trans }{\sigma_i}.
$$

Then the soft label matrix can be written as
\eqta\label{eq:F_GF}
\b F = \b G \b Y = \b L^\dagger \b Y,
\eqtb
where $\b L^\dagger$ is the Moore-Penrose inverse for $\b L$.

The method is also explained with electric resistor networks or random walks\cite{Ding2007ALF, Wang2008SemisupervisedLB}. These assumptions explain the Green-function method as a whole, but can not tell us in more detail what happens when using the method. A novel interpretation is shown in Section \ref{sec:Derivation}.

GF is used as a shorthand for the method.

%LLGC
\subsection{Learning With Local and Global Consistency} \label{llgc}
\noindent D. Zhou \textit{et al.}\cite{Zhou2003LearningWL} proposed the method of Learning with Local and Global Consistency(LLGC), which is to minimize the cost function:
$$
\mathcal Q(F) = \frac{1}{2} \left( \sum_{i,j=1}^n S_{ij} || \frac{\b f_i}{\sqrt D_{ii}} - \frac{\b f_j}{\sqrt{D_{jj}}}||^2_2 + \gamma \sum_{i=1}^n ||\b f_i - \b y_i ||^2_2\right)
$$
where $\gamma > 0$ is the regularization parameter, $S_{ij}$ means the similarity between the $i$-th and $j$-th samples and $D_{ii}$ means the degree of the $i$-th sample.

For graphs in which the similarity matrix $\b S$ is doubly-stochastic, the degree of any sample equals $1$ and the method is expressed as the optimization problem:
\eqta\label{eq:LLGC}
\min_{\b F} \sum_{i,j=1}^n S_{ij} || \b f_i - \b f_j ||^2_2 + \gamma \sum_{i=1}^n ||\b f_i - \b y_i ||^2_2
\eqtb

And the solution is
\eqta\label{eq:F_LLGC}
\b F = (\b L + \gamma \b I)^{-1} \b Y.
\eqtb

The cost function was explained with the \textit{smoothness constraint} and the \textit{fitting constraint}. The left-hand term named the \textit{smoothness constraint} forces the soft label vectors of nearby points to barely differ from each other, while the right-hand term named the \textit{fitting constraint} means that predicted labels are supposed to be close to the initial labels. For labeled samples, the soft label vectors are close to the truth of $1$ while for unlabeled samples, those are close to zero.

There is something counter-intuitive about the \textit{fitting constraint}, which is elaborated in Section \ref{sec:Compare}.

LLGC is used as a shorthand for the method.

%Interpretation
\section{Proposed Method and Theoretical Derivation} \label{sec:method}
\noindent In this paper, the proposed model can be reformulated as the following form:
\eqta\label{eq:min}
\min_{\b F\trans \b 1_n = \b 0} \r{Tr}(\b F\trans \b L \b F) - 2 \gamma \r{Tr}(\b F\trans \b Y),
\eqtb
which can be seen as an interpretation of the Green-function method from the perspective of optimization.

As is shown in Fig. \ref{fig:Contri}, we propose two rules \textit{smoothness} and \textit{label margin} when analyzing the graph-based semi-supervised problem and summarize the third rule \textit{class balance} by proving the Theorem 1\&2. With the three rules, we obtain the method and prove the equivalence between it and the Green-function method on a fully connected graph, which means another interpretation. Therefore, the physical meanings of the Green-function method can be deduced as Conclusion 1.

For many GSSL methods like the Green-function method, the results with different coding ways are proved to be exactly equal to each other, which is summarized in Conclusion 2. After that, we draw Conclusion 3 of why the Green-function method performs worse on non-fully connected graphs and propose a workable approach to improve it.

%Rules
\subsection{Rules for Derivation} \label{sec:Rules}
\noindent To deduce our proposed method, three rules are summarized. They’re \textit{smoothness}, \textit{label margin}, and \textit{class balance}.

\t{Smoothness Rule} is the same as that in LLGC\cite{Zhou2003LearningWL}, which means the more similar any two samples are, the shorter the distance between their soft label vectors is.

To make the graph smoother, the formula
$$
\sum_{i=1}^n S_{ij} ||\b f_i - \b f_j ||_2^2
$$
needs to be minimized.
In this way, a large $S_{ij}$ and a large difference between $\b f_i$ and $\b f_j$ will be of great cost.

\t{Label Margin Rule} performs differently on different types of samples. The labeled positive samples whose label $y$ is $+1$ have soft labels $f$ as large as possible, the labeled negative samples whose label $y$ is $-1$ have those as small as possible, and those of unlabeled samples are of no restrictions. This rule can make a great margin between positive and negative samples and is practiced by maximizing ${\b f_i}\trans \b y_i$ for each sample $\b x_i$. As a whole, it needs to maximize that
$$
\sum_{i=1}^n {\b f_i}\trans \b y_i.
$$

With the first two rules, we achieve the problem:
\eqta\label{eq:min_v_without}
\min_{\b F} \sum_{i=1}^n S_{ij} ||\b f_i - \b f_j ||_2^2 - 2 \gamma \sum_{i=1}^n {\b f_i}\trans \b y_i,
\eqtb
where $\gamma > 0$ is a parameter weighing the importance of the first two rules. In the subsequent derivation, its value won’t affect the classification results.

Problem (\ref{eq:min_v_without}) can be transformed into a matrix form as:
\eqta\label{eq:min_without}
\min_{\b F} \r{Tr}(\b F\trans \b L \b F) - 2 \gamma \r{Tr}(\b F\trans \b Y),
\eqtb
where $\b L = \b D - \b S$ is the Laplacian matrix.

Before giving up the third rule \textit{class balance}, we’d like to introduce several theorems to analyze the problem (\ref{eq:min_without}).

%Proof
\subsection{Theorems About Solving the Problem (\ref{eq:min_without})} \label{sec:Proof}

\newtheorem{lemma}{Lemma}
\newtheorem{theorem}{Theorem}
\newtheorem{definition}{Notation}
\newtheorem{corollary}{Corollary}
\renewcommand{\qedsymbol}{$\blacksquare$}

The problem can be transformed into the format
$$
\min_{\b F} \sum_{i=1}^{n}(\b f_i\trans \b L \b f_i - 2 \gamma \b f_i\trans \b y_i).
$$

Therefore, it can be solved by solving the more common problem:
\eqta\label{eq:Ax}
\min_{\b x} \b x\trans \b A \b x - 2 \b x\trans \b b.
\eqtb

\vspace{1em}
\begin{definition}[]
Assume that $\b A \in \mathbb R^{n \times n}$ is a semi-positive definite matrix whose rank is $r$. $\b A$ can be decomposed compactly as $\b A = \b U \b \Sigma \b U\trans$, where $\b \Sigma \in \mathbb R ^{r \times r}$ is a diagonal matrix with non-zero eigenvalues as the diagonals and $\b U \in \mathbb R^{n \times r}$ is formed by the orthogonal eigenvectors corresponding to them. Denote $n-r$ as $h$, ${\b U_\perp \in \mathbb R^{n \times h}}$ as the orthogonal complement of $\b U$, and $\b A^\dagger$ as $\b U \b \Sigma^{-1} \b U\trans$. It is easy to validate that $\b A^\dagger$ is the Moore-Penrose inverse of $\b A$ with the definition.
\end{definition}

\vspace{1em}
\begin{lemma}[]
Let's say $\b x$ is an unknown $n$-dimensional vector. There are solutions to $\b A \b x = \b b$ if and only if ${\b U_\perp} \trans \b b = \b 0$, where $\b 0$ denotes a zero vector.
\end{lemma}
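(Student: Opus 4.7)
The plan is to prove the two directions of the biconditional separately, using the orthogonality properties of $[\b U, \b U_\perp]$ as the central tool. Since $\b A$ is symmetric semi-positive definite with $\b A = \b U \b \Sigma \b U\trans$, its column space coincides with $\text{Range}(\b U)$, and $[\b U, \b U_\perp]$ forms an orthonormal basis of $\mathbb R^n$. The lemma is then really the statement that $\b A \b x = \b b$ is solvable iff $\b b \in \text{Range}(\b A)$, recast in terms of $\b U_\perp$.

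For the forward direction, I would assume $\b A \b x = \b b$ has some solution $\b x$, then left-multiply by $\b U_\perp\trans$ and substitute the decomposition to get $\b U_\perp\trans \b b = \b U_\perp\trans \b U \b \Sigma \b U\trans \b x$; since $\b U_\perp\trans \b U = \b 0$ by construction of the orthogonal complement, this collapses to $\b U_\perp\trans \b b = \b 0$ in one line.

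For the reverse direction, I would assume $\b U_\perp\trans \b b = \b 0$ and exhibit an explicit solution, namely $\b x = \b A^\dagger \b b = \b U \b \Sigma^{-1} \b U\trans \b b$. Plugging in gives $\b A \b x = \b U \b \Sigma \b U\trans \b U \b \Sigma^{-1} \b U\trans \b b = \b U \b U\trans \b b$ using $\b U\trans \b U = \b I_r$. To finish, I would use the identity $\b I_n = \b U \b U\trans + \b U_\perp \b U_\perp\trans$ (orthogonal decomposition of $\mathbb R^n$) so that $\b b = \b U \b U\trans \b b + \b U_\perp \b U_\perp\trans \b b = \b U \b U\trans \b b$ under the assumption $\b U_\perp\trans \b b = \b 0$, giving $\b A \b x = \b b$.

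I do not anticipate a genuine obstacle: the main subtlety is simply being careful about the dimensions ($\b U$ has only $r$ columns, so $\b U \b U\trans \neq \b I_n$ in general) and invoking the resolution-of-identity $\b U \b U\trans + \b U_\perp \b U_\perp\trans = \b I_n$ explicitly rather than sloppily writing $\b U \b U\trans = \b I$. Exhibiting $\b A^\dagger \b b$ as the witness has the added benefit of matching the notation introduced in the preceding Notation block, which will make the construction reusable in later theorems.
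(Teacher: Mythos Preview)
Your proposal is correct and follows essentially the same approach as the paper: both directions hinge on the orthogonality relation $\b U_\perp\trans \b U = \b O$, and the reverse direction is established by exhibiting the explicit solution $\b U \b \Sigma^{-1} \b U\trans \b b$ (the paper writes this as $\b U \b \Sigma^{-1} \b p$ after setting $\b b = \b U \b p$, which is the same vector). The only cosmetic difference is that the paper proves the forward direction by contrapositive (assuming $\b U_\perp\trans \b b \neq \b 0$ and deriving a contradiction from the decomposition $\b b = \b U \b p + \b U_\perp \b q$), whereas your direct left-multiplication by $\b U_\perp\trans$ is a one-line shortcut of the same idea.
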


\begin{proof}
Union of $\b U$ and its orthogonal complement $\b U_\perp$ spans the entire space, so we could assume $\b b$ as $$\b b = \b U \b p + {\b U_\perp} \b q,$$ where $\b p$ and $\b q$ are both vectors.

Firstly, we’d like to prove that if ${\b U_\perp} \trans \b b \neq \b 0$, there’s no solution to $\b A \b x = \b b$.
With ${\b U_\perp} \trans \b b \neq \b 0$, we have $${\b U_\perp} \trans \b U \b p + {\b U_\perp}\trans {\b U_\perp} \b q \neq \b 0.$$

Because of ${\b U_\perp} \trans \b U = \b O$ and ${\b U_\perp}\trans {\b U_\perp} = \b I$, we know that $\b q \neq \b 0$.
However, if we substitute $\b b = \b U \b p + {\b U_\perp} \b q,$ and $\b A = \b U \b \Sigma \b U\trans$ into $\b A \b x = \b b$, it can be obtained that $$\b U (\b \Sigma \b U\trans \b x - \b p) = \b U_\perp \b q.$$

Since $\b U$ and $\b U_\perp$ are linearly independent, $\b q$ must be a zero vector which leads to a contradiction. There’s no solution as a result.

Secondly, we’ll give out a solution to $\b A \b x = \b b$ when ${\b U_\perp} \trans \b b = \b 0$.
With ${\b U_\perp} \trans \b b = \b 0$, we know that $\b q = \b 0$ and $\b b = \b U \b p$. It can be validated that $\b x = \b U \b \Sigma^{-1} \b p$ is a solution to $\b A \b x = \b b$.
\end{proof}

\vspace{1em}
\begin{theorem}[]
The minimization problem (\ref{eq:Ax}) has no solutions in the real space when ${\b U_\perp} \trans \b b \neq \b 0$, while has infinite solutions when ${\b U_\perp} \trans \b b = \b 0$.
\end{theorem}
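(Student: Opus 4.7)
The plan is to reduce the optimization to a linear system via first-order optimality and then apply Lemma 1 directly. Since the objective $\b x\trans \b A \b x - 2\b x\trans \b b$ is a quadratic form with $\b A$ semi-positive definite, setting the gradient to zero gives $\b A \b x = \b b$; moreover, because $\b A \succeq \b 0$ the objective is convex, so any stationary point is a global minimizer, and conversely a global minimizer (if one exists in $\mathbb R^n$) must be a stationary point. Thus the minimization problem has a solution if and only if the linear system $\b A \b x = \b b$ does, and by Lemma 1 this is equivalent to $\b U_\perp\trans \b b = \b 0$.

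For the case $\b U_\perp\trans \b b \neq \b 0$, I would show the objective is unbounded below, which both confirms non-existence of a minimizer and gives the geometric picture. Write $\b b = \b U \b p + \b U_\perp \b q$ with $\b q \neq \b 0$, and choose the direction $\b x = t\, \b U_\perp \b q$ for $t \in \mathbb R$. Using $\b A \b U_\perp = \b U \b \Sigma \b U\trans \b U_\perp = \b O$ and $\b U_\perp\trans \b U_\perp = \b I$, the objective collapses to $-2 t \|\b q\|_2^2$, which tends to $-\infty$ as $t \to \infty$. Hence no minimizer exists in $\mathbb R^n$.

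For the case $\b U_\perp\trans \b b = \b 0$, Lemma 1 provides one solution $\b x_0 = \b U \b \Sigma^{-1} \b p$ to $\b A \b x = \b b$. I would then exhibit an infinite family by perturbing along the null space of $\b A$: for any $\b v \in \mathbb R^h$, let $\b x_v = \b x_0 + \b U_\perp \b v$. A short expansion using $\b U_\perp\trans \b A = \b O$ and $\b U_\perp\trans \b b = \b 0$ shows that the objective at $\b x_v$ equals that at $\b x_0$, so every $\b x_v$ is also a global minimizer. Since $h \geq 1$ (as $\b A$ is rank-deficient in the regime of interest; more precisely, whenever $h \geq 1$ this yields infinitely many solutions), we obtain an infinite solution set.

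The main conceptual obstacle is ensuring that, in the non-existence case, one rules out a minimizer \emph{globally} rather than merely noting the absence of a critical point. The trick is to exploit convexity together with the explicit descent direction $\b U_\perp \b q$, which both has zero quadratic cost (it lies in $\ker \b A$) and strictly negative linear cost (its inner product with $\b b$ equals $\|\b q\|_2^2 > 0$). This single direction simultaneously certifies unboundedness and makes the dichotomy in the theorem transparent.
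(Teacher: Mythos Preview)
Your proposal is correct and follows essentially the same route as the paper: reduce to the first-order condition $\b A\b x=\b b$, invoke Lemma~1, and then parametrize the minimizers as $\b A^\dagger\b b+\b U_\perp\beta$. Your explicit descent direction $t\,\b U_\perp\b q$ for the unbounded case is a welcome sharpening of the paper's brief remark that the minimum ``only when $\b x$ tends to infinity,'' and your caveat that $h\geq 1$ is needed for the ``infinite'' claim is accurate (and tacitly assumed in the paper's Laplacian setting).
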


\begin{proof}
The extreme value appears when the derivative equals zero, that is $\b A \b x - \b b = \b 0$. As is proved in Lemma 1, there’s no solution when ${\b U_\perp} \trans \b b \neq \b 0$, so does the extreme value. This problem will have minimization only when $\b x$ tends to infinity and that is a trivial solution.

When ${\b U_\perp} \trans \b b = \b 0$, $\b x$ can be expressed in terms of $\b U$ and $\b U_\perp$ as $$\b x = \b U \alpha + \b U_\perp \beta,$$ where $\alpha$ and $\beta$ are both vectors.

Plug it and $\b A = \b U \b \Sigma \b U\trans$ into $\b A \b x = \b b$, and we achieve $$\b U \b \Sigma \alpha = \b b \Rightarrow \alpha = \b \Sigma^{-1} \b U\trans \b b.$$

With $\alpha$ inferred, we get $$\b x = \b A^\dagger\b b + \b U_\perp \beta.$$

$\{\b A^\dagger\b b + \b U_\perp \beta: \beta \in \mathbb R^h \}$ is the set of solutions.

The minimum is achieved when and only when $\b x$ equals any element in the set, and it is obviously an infinite set.
\end{proof}

\vspace{1em}
\begin{corollary}[]
For any Laplacian matrix $\b L$ whose null space's orthogonal basis vectors from an matrix $\b U_\perp$, there are solutions to the minimization problem (\ref{eq:min_without}) if and only if $\b U_\perp\trans \b Y = \b O$ where $\b O$ denotes the zero matrix. And any element in the set $$\{\gamma \b L^\dagger\b Y + \b U_\perp \beta: \beta \in \mathbb R^{h \times c} \}$$ is a solution, where $h$ denotes the size of the null space, and $n$ and $c$ denotes the size of $\b F$ and $\b Y$.
\end{corollary}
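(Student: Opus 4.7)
The plan is to reduce Corollary 1 to Theorem 1 by splitting the matrix minimization column-by-column. Writing $\b F = (\b f_1, \ldots, \b f_c)$ and $\b Y = (\b y_1, \ldots, \b y_c)$ (viewed now as $c$ columns indexed by class, not as rows as earlier in the paper), the trace objective in (\ref{eq:min_without}) decouples as
\begin{equation*}
\r{Tr}(\b F\trans \b L \b F) - 2\gamma\,\r{Tr}(\b F\trans \b Y) = \sum_{k=1}^{c}\bigl(\b f_k\trans \b L \b f_k - 2\b f_k\trans (\gamma \b y_k)\bigr),
\end{equation*}
so minimizing over $\b F$ is equivalent to minimizing each summand independently in $\b f_k$. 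Each summand has exactly the form (\ref{eq:Ax}) with $\b A := \b L$ and $\b b := \gamma \b y_k$.

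Next I would invoke Theorem 1 for each $k$. Since $\b L$ is symmetric positive semi-definite, the notation from the Notation block applies with $\b U_\perp$ being a basis for the null space of $\b L$, exactly as stated in the hypothesis of the corollary. Theorem 1 then says the $k$-th subproblem admits a (finite) minimizer if and only if $\b U_\perp\trans(\gamma \b y_k) = \b 0$, equivalently $\b U_\perp\trans \b y_k = \b 0$ since $\gamma>0$. Collecting these $c$ conditions into matrix form yields $\b U_\perp\trans \b Y = \b O$, which is the stated existence criterion. Conversely, if this condition fails for some column, then the corresponding subproblem is unbounded below, hence so is the full objective, and no minimizer exists in real space.

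Assuming the existence condition holds, Theorem 1 gives the solution set for the $k$-th column as $\{\gamma \b L^\dagger \b y_k + \b U_\perp \beta_k : \beta_k \in \mathbb R^{h}\}$ (the factor $\gamma$ coming from $\b b = \gamma \b y_k$). Reassembling the columns, an $\b F$ is a minimizer iff each column lies in this set, which is precisely captured by
\begin{equation*}
\b F \in \{\gamma \b L^\dagger \b Y + \b U_\perp \beta : \beta \in \mathbb R^{h\times c}\},
\end{equation*}
as the corollary claims. The $c$ free blocks $\beta_k$ make up the columns of $\beta$.

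The routine arithmetic is trivial; the only subtlety worth flagging is the matching of notation. Theorem 1 was phrased for a general PSD $\b A$, so I must check that the $\b U_\perp$ appearing in the Notation block (orthogonal complement of the range) coincides with a basis of $\ker \b L$ used in the corollary. For symmetric PSD matrices these are the same subspace, so the identification is immediate. This is the only place where one could trip up, and once it is handled the corollary is just a columnwise restatement of Theorem 1.
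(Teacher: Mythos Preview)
Your proposal is correct and matches the paper's approach: the paper states Corollary~1 without a separate proof, relying on the column-wise reduction to problem~(\ref{eq:Ax}) that it spelled out just before Theorem~1, and your argument fills in exactly that reduction together with the reassembly of the per-column solution sets. Your remark that the decomposition is over the $c$ columns (classes) rather than the $n$ rows is a useful clarification of the paper's slightly loose indexing.
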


%\begin{proof}
%The Laplacian matrix $\b L$ equals to $\b D - \b S$ so as to be a semi-positive definite matrix which can be decomposed compactly as $\b L = \b U \b \Sigma \b U\trans$. $\b U_\perp$ is the orthogonal complement of $\b U$.
%Then, the problem is equivalent to $$\min_{\b f_i} \sum_{i=1}^{n}(\b f_i\trans \b L \b f_i - 2 \gamma \b f_i\trans \b y_i),$$ which can be applied with Lemma 2 and the set of solutions is $\{\gamma \b L^\dagger\b Y + \b U_\perp \beta: \beta \in \mathbb R^{h \times c} \}.$
%\end{proof}

According to the corollary, the classification $\hat y$ obtained by solving the problem (\ref{eq:min_without}) will vary when $\beta$ varies, which is unacceptable.
We try to find the situation when there’s only one solution and the solution is better to be the main part of the set.
Fortunately, we found it.

\vspace{1em}
\begin{theorem}[]
The minimization problem $$\min_{{\b U_\perp} \trans \b x = \b 0} \b x\trans \b A \b x - 2 \b x\trans \b b$$ has one and only one solution $\b x = \b A^\dagger \b b$.
\end{theorem}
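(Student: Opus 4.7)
The plan is to exploit the constraint to reduce the constrained problem to an unconstrained strictly convex one on a lower-dimensional space, where a unique closed-form minimizer is immediate. Since $\{\b U, \b U_\perp\}$ forms an orthonormal basis of $\mathbb{R}^n$, the constraint $\b U_\perp\trans \b x = \b 0$ is equivalent to saying $\b x$ lies in the column space of $\b U$, so I would begin by writing $\b x = \b U \alpha$ for a free parameter $\alpha \in \mathbb{R}^r$. This parametrization is bijective onto the feasible set, because $\b U$ has orthonormal columns, so $\alpha = \b U\trans \b x$ recovers $\alpha$ uniquely from $\b x$.

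Next I would substitute $\b x = \b U \alpha$ and $\b A = \b U \b\Sigma \b U\trans$ into the objective. Using $\b U\trans \b U = \b I_r$, the quadratic term collapses to $\alpha\trans \b\Sigma \alpha$ and the linear term becomes $2 \alpha\trans \b U\trans \b b$, giving the reduced problem
\begin{equation*}
\min_{\alpha \in \mathbb{R}^r} \alpha\trans \b\Sigma \alpha - 2 \alpha\trans \b U\trans \b b .
\end{equation*}
Because $\b\Sigma$ is a diagonal matrix with strictly positive diagonal entries (the nonzero eigenvalues of $\b A$), this reduced objective is strictly convex and coercive in $\alpha$, so a unique minimizer exists and is characterized by the first-order condition $\b\Sigma \alpha = \b U\trans \b b$, yielding $\alpha^\star = \b\Sigma^{-1} \b U\trans \b b$.

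Finally I would map back: $\b x^\star = \b U \alpha^\star = \b U \b\Sigma^{-1} \b U\trans \b b = \b A^\dagger \b b$, using the definition of the Moore--Penrose inverse given in Notation~1. Uniqueness of $\b x^\star$ follows from the uniqueness of $\alpha^\star$ together with the bijection $\b x \leftrightarrow \alpha$ on the feasible set.

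I do not expect a serious obstacle here: the main subtlety is just justifying that the constraint exactly pins $\b x$ to $\mathrm{range}(\b U)$, which follows from the orthogonal decomposition $\mathbb{R}^n = \mathrm{range}(\b U) \oplus \mathrm{range}(\b U_\perp)$ implicit in Notation~1. Notice that, in contrast to Theorem~1, the condition $\b U_\perp\trans \b b = \b 0$ is \emph{not} needed here, since the troublesome $\b U_\perp$-component of $\b x$ has been killed off by the constraint; this is precisely why adding the constraint cures the non-uniqueness (and, when $\b U_\perp\trans \b b \neq \b 0$, the non-existence) observed in Theorem~1, and it will motivate the constrained reformulation~(\ref{eq:min}) of the authors' proposed method.
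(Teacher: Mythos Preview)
Your proposal is correct and follows essentially the same route as the paper: both parametrize the feasible set as $\b x = \b U\alpha$, solve for $\alpha$, and recover $\b x = \b A^\dagger \b b$. Your version is arguably a bit cleaner, since you substitute into the objective and invoke strict convexity of the reduced problem in $\alpha$, whereas the paper writes the unconstrained stationarity condition $\b A\b x = \b b$ and then plugs in $\b x = \b U\alpha$, which is slightly awkward when $\b U_\perp\trans \b b \neq \b 0$.
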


\begin{proof}
Denote $\b x$ in terms of $\b U$ and $\b U_\perp$ as $$\b x = \b U \alpha + \b U_\perp \beta,$$ where $\alpha$ and $\beta$ are both vectors.

Due to the restriction ${\b U_\perp} \trans \b x = \b 0$, we obtain $${\b U_\perp} \trans \b U \alpha + {\b U_\perp}\trans {\b U_\perp} \beta = \b 0,$$.

Because of ${\b U_\perp} \trans \b U = \b O$ and ${\b U_\perp}\trans {\b U_\perp} = \b I$, we know that $\beta = \b 0$ and $\b x = \b U\alpha$.

The extreme value appears when the derivative equals zero, that is $\b A \b x - \b b = \b 0$. Plug $\b x = \b U\alpha$ into it, and we have $\alpha = \b \Sigma^{-1} \b U\trans \b b.$ The only possible solution is $$\b x = \b U \alpha = \b A^\dagger \b b.$$

And it can be proved that $\b x = \b A^\dagger \b b$ is exactly a solution to the problem.
\end{proof}

\vspace{1em}
\begin{corollary}[]
For any Laplacian matrix $\b L$ whose null space's orthogonal basis vectors from an matrix $\b U_\perp$, the minimization problem 
\eqta\label{eq:min_U}
\min_{\b F\trans \b U_\perp = \b O} \r{Tr}(\b F\trans \b L \b F) - 2 \gamma \r{Tr}(\b F\trans \b Y),
\eqtb
has and only has the solution $\b F = \b L^\dagger\b Y.$
\end{corollary}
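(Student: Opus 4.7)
The plan is to reduce the matrix-valued optimization in \eqref{eq:min_U} to $c$ independent vector-valued problems of the form handled by Theorem 2, and then reassemble the column-wise solutions. The key observation is that both the objective and the constraint decouple across columns of $\b F$.

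First, I would denote by $\b F_{:,k}$ and $\b Y_{:,k}$ the $k$-th columns of $\b F$ and $\b Y$ for $k=1,\dots,c$. Using linearity of the trace, write
\eqta
\r{Tr}(\b F\trans \b L \b F) - 2\gamma\, \r{Tr}(\b F\trans \b Y) = \sum_{k=1}^c \left( \b F_{:,k}\trans \b L \b F_{:,k} - 2\gamma\, \b F_{:,k}\trans \b Y_{:,k}\right).
\eqtb
The constraint $\b F\trans \b U_\perp = \b O$ is equivalent to $\b U_\perp\trans \b F = \b O$, which in turn is the conjunction of the $c$ vector constraints $\b U_\perp\trans \b F_{:,k} = \b 0$. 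Since neither the objective nor the constraint couples distinct columns, the minimization problem splits into $c$ independent subproblems, one per column.

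Second, for each $k$ the subproblem reads
\eqta
\min_{\b U_\perp\trans \b F_{:,k} = \b 0}\; \b F_{:,k}\trans \b L \b F_{:,k} - 2\, \b F_{:,k}\trans (\gamma \b Y_{:,k}),
\eqtb
which fits exactly the hypotheses of Theorem 2 with $\b A = \b L$ (Laplacians are symmetric positive semidefinite, so Notation 1 applies, and $\b U_\perp$ spans the null space) and $\b b = \gamma \b Y_{:,k}$. Theorem 2 then yields the unique minimizer $\b F_{:,k} = \b L^\dagger (\gamma \b Y_{:,k}) = \gamma \b L^\dagger \b Y_{:,k}$. Reassembling the columns gives the unique minimizer $\b F = \gamma \b L^\dagger \b Y$, which coincides with $\b L^\dagger \b Y$ up to the overall scalar $\gamma$ that does not affect the argmax classification rule in \eqref{eq:y_result}, consistent with the remark following \eqref{eq:min_v_without}.

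There is essentially no obstacle beyond the bookkeeping of moving between matrix and vector forms: the uniqueness and the closed-form expression are both inherited directly from Theorem 2 applied columnwise. The only step worth stating carefully is the equivalence between the matrix constraint $\b F\trans \b U_\perp = \b O$ and the per-column vector constraint, together with the observation that $\b L$ is symmetric so that Notation 1's decomposition $\b L = \b U \b \Sigma \b U\trans$ indeed identifies $\b U_\perp$ as a basis of the null space used both in the constraint and in the definition of $\b L^\dagger$.
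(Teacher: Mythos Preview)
Your proposal is correct and follows essentially the same approach as the paper: the paper sets up exactly this column-wise reduction at the start of Section~\ref{sec:Proof} (writing the trace objective as a sum of vector problems) and then states Corollary~2 without further proof as the immediate matrix analogue of Theorem~2. Your observation that the minimizer is $\gamma \b L^\dagger \b Y$ rather than $\b L^\dagger \b Y$ is also consistent with the paper, which silently drops the $\gamma$ in the corollary and then reinstates it in the discussion of Section~\ref{sec:Derivation}.
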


According to Corollary 1 and 2, if we want to guarantee one and only one solution to the problem (\ref{eq:min_without}), the constraint $\b F\trans \b U_\perp = \b O$ is necessary. And the solution will be the main part of the infinite set of solutions that might exist.

% Derivation
\subsection{Our Proposed Method and its Physical Meanings} \label{sec:Derivation}

\noindent On a connected graph, the null space of the Laplacian matrix $\b L$ is one-dimensional and one of the basis vectors of the space is the 1-vector. In the other word, $\b U_\perp$ equals to the vector $\frac{1}{\sqrt n} \b 1$, and then, the problem (\ref{eq:min_U}) turns into the problem (\ref{eq:min}). The method we proposed is a solution to this problem.

With $\b F = \gamma \b L^\dagger \b Y$, the value $\gamma > 0$ won’t change the predicted result in Eq. (\ref{eq:y_result}), so we simply use $\gamma = 1$ latter in the paper and the solution is equivalent to the Green-function method in Eq. (\ref{eq:F_GF}).

The constraint $\b F\trans \b 1 = \b 0$ is explained as a new rule named \textit{class balance}:

\t{Class Balance Rule} ensures that there’s one and only one result to the problem. With this rule on a connected graph, it is guaranteed that any column of $\b F$ has an average of zero. The soft labels are evenly distributed on both sides of zero for each class in order that an intuitive $\b F$ is selected from the set of solutions.

With the three rules, we interpret the Green-function method in an intuitive and novel perspective and draw a conclusion.

\t{Conclusion 1.} On a connected graph, the Green-function method tends to sustain the smoothness of the whole graph, to make margins as large as possible between positive and negative samples for each class, and to make soft labels of all samples for each class to evenly distribute on both sides of 0.

%Equivalence
\subsection{Equivalence Between Coding Ways} \label{sec:Equivalence}
\noindent To classify samples as Eq. (\ref{eq:y_result}), the initial labeling matrix $\b Y$ is of great importance. In this section, the coding ways as Eq. (\ref{eq:y1}) and as Eq. (\ref{eq:y2}) are proved to be equivalent. In another word, there is no difference in whether to set the negative samples at $0$ or at $-1$.

According to Eq. (\ref{eq:y1}) and  Eq. (\ref{eq:y2}), we have:
$$
\b Y^{(1)} = \left[ \arra{cc} 2 \b \Phi - \b 1_l {\b 1_c}\trans \\ \b O \arrb \right], \b Y^{(2)} = \left[ \arra{cc} \b \Phi \\ \b O \arrb \right]
$$
where $\b \Phi \in \mathbb R ^{l \times c}$ denotes the upper part of $\b Y^{(2)}$.

For the Green-function method and some other methods like LLGC, the solution can be calculated into the format of $\b F = \b G \b Y$, where $\b G$ is $\b L^\dagger$ for GF and $(\b L + \mu \b I)^{-1}$ for LLGC\cite{Zhou2003LearningWL}.

Then the two types of soft label matrixes are
$$\alna
\b F^{(2)} &= \b G \b Y^{(2)} \\
\b F^{(1)} &= \b G \b Y^{(1)} = 2 \b F^{(2)} - \b g {\b 1_c}\trans,
\alnb$$
where $\b g = \b G \left[ \arra{cc} \b 1_l \\ \b 0_u \arrb \right]$.

For the $i$-th sample, it is satisfied that $\b f^{(1)}_i = 2 \b f^{(2)}_i - (\b g)_i {\b 1}_c \trans$. Obviously, it will be allocated into the same class $j$ for both.

\t{Conclusion 2.} Whether to code the labeled negative samples as $-1$ or $0$ won’t affect the final classification result of GF and some other methods such as LLGC\cite{Zhou2003LearningWL}.

%Extending
\subsection{When It Comes to Non-fully Connected Graphs} \label{sec:Non-fully}
\noindent Everything we’ve talked about so far has been on fully connected graphs, including the Moore-Penrose inverse of $\b L$ and the optimization problem we put forward. Some improvement needs to be introduced when it comes to non-fully connected graphs.

In a graph with $h$ connecting pieces, $\b L$ has a null space of $h$ dimensions and the smallest $h$ eigenvalues of $\b L$ equal zero. If we still use the Moore-Penrose inverse as the solution, Eq. (\ref{eq:F_GF}) turns into:
$$
\b F = \b G \b Y = \b L^\dagger \b Y = \left( \sum_{i = h+1}^n \frac{\b u_i {\b u_i}\trans}{\sigma_i} \right) \b Y .
$$

This corresponds to the constraint in the problem (\ref{eq:min_U}):
$$
\b F\trans \b U_\perp = \b O
$$
where $\b U_\perp = (\b u_1, \b u_2, ..., \b u_h)\in \mathbb R ^{n \times h}$ spans the null space of $\b L$. However, the constraint is unreasonable.

\begin{figure}[b]
\centering
\includegraphics[width=3.5in]{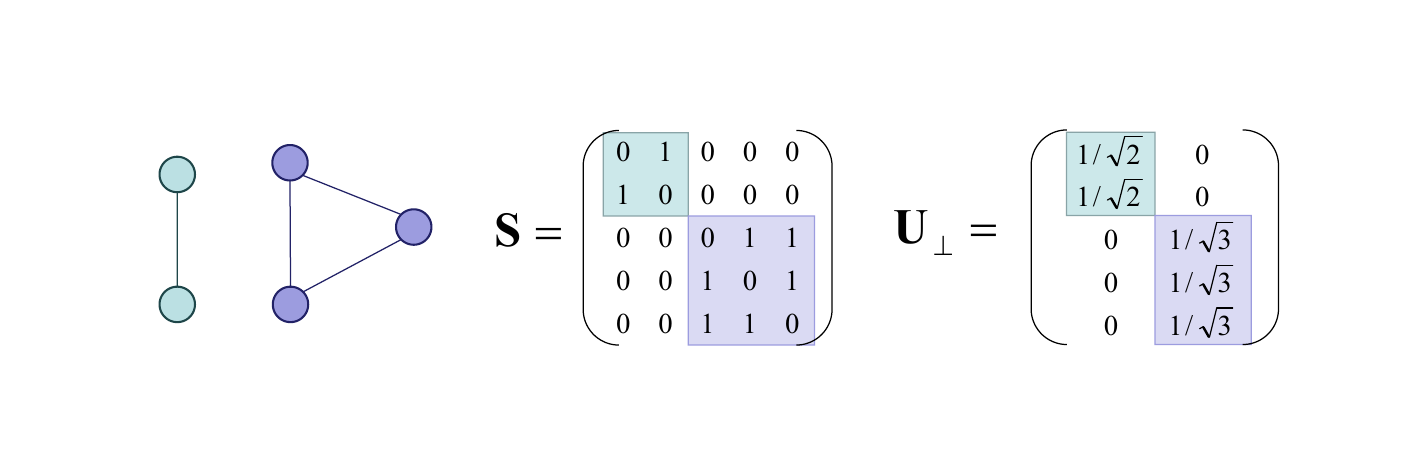}
\caption{An example of $\b U_\perp$ on 2-connected graph. It can be validated that $\b U_\perp$ spans the null space of $\b L$.}
\label{fig:h-connected}
\end{figure}

The explanation of the constraint can be based on any $\b U_\perp$ that fits the requirement. Let’s assume a special $\b U_\perp$. Each vector $\b u_i$ in it describes the situation of the $i$-th connected subgraph. If $\b u_i$ is corresponding to a connected subgraph with $n_i$ points, $n_i$ elements of $\b u_i$ equal $1 / \sqrt {n_i}$, and others are zero. It is obvious that the matrix fits the requirement and an example is given as Fig. \ref{fig:h-connected}.

As a result, the constraint turns into
$$
\b F\trans \begin{bmatrix}
\frac{1}{\sqrt{n_1}} \b 1_{n_1} & \b 0_{n_1} & \cdots & \b 0_{n_1}\\
\b 0_{n_2} & \frac{1}{\sqrt{n_2}} \b 1_{n_2} & \cdots & \b 0_{n_2}\\
\vdots & \vdots & \ddots & \vdots \\
\b 0_{n_h}& \b 0_{n_h} & \cdots & \frac{1}{\sqrt{n_h}} \b 1_{n_h}
\end{bmatrix} = \b O,
$$
which leads to a conclusion.

\t{Conclusion 3.} The constraint $\b F\trans \b U_\perp = \b O$ requires soft labels in each connected subgraph balanced and having an average of zero. Therefore, it will cause a mess to directly apply the Green-function method on a non-fully connected graph.

We try to turn a non-fully connected graph into a fully connected one with perturbation to solve the problem. Assume that any two pairs would gain a little bit of similarity because of the perturbation, that is,
$$
\b S^* = \b S + \mu \b 1_n {\b 1_n} \trans, 0 < \mu \ll 1.
$$

Adjusted with $\b S^*$, a renewed Laplacian matrix $\b L^*$ becomes:
\eqta\label{eq:L*}
\b L^* = \b L + n \mu \b I_n -\mu \b 1_n {\b 1_n}\trans,
\eqtb
where $\b I_n$ is the identity matrix.

It is clear that $\b L^*$ has the eigenvalues as
\eqta
\sigma^*_i = \left\{
\alna
0 &, &&i = 1 \\
n \mu&, &&2 \leq i \leq h \\
\sigma_i + n \mu&, &&h < i \leq n
\alnb
\right.,
\eqtb
and the eigenvectors the same as $\b L$. The dimension of null space is 1 and the problem turns into:
\eqta\label{eq:min*}
\min_{\b F\trans \b 1_n = \b 0} \r{Tr}(\b F\trans \b L^* \b F) - 2 \gamma \r{Tr}(\b F\trans \b Y).
\eqtb

%accelerating techniques
\section{Accelerating Techniques on Large Graphs} \label{sec:acclr}

\noindent GSSL methods have high computational costs to deal with the similarity matrix and the Laplacian matrix when the number of samples is large. As is shown in Fig. (\ref{fig:Contri}) , we propose two techniques to accelerate the improved Green-function method. The method with the Gauss Elimination technique has the same result proved by theory and the time complexity is still $O(n^3)$ yet with a smaller constant coefficient. The method with the second one follows the anchor-based models\cite{nie2010general, Liu2010LargeGC, liu2016large, He2020FastSL} and gets a speed boost. The method accelerated by Anchored Graphs has a complexity of $O(nm^2)$ and needs to generate $m$ anchor points.

%Gauss
\subsection{Accelerating by Gauss Elimination}
\noindent The Green-function method needs to calculate the Moore-Penrose inverse for $\b L$, as is shown in Eq. (6). The calculation of the Moore-Penrose inverse usually needs the singular value decomposition(SVD) of $\b L$, and thus costs a lot of time.

By transforming the constraint $\b F\trans \b 1_n = \b 0$ into a term with the infinite coefficient $\eta$, problem (\ref{eq:min*}) becomes:
\eqta\label{eq:min_eta}
\min_{\b F} \r{Tr} (\b F \trans \b L^* \b F) - 2 \r{Tr} (\b F\trans \b Y) + \eta \r{Tr} (\b F \trans \b 1 {\b 1} \trans \b F),
\eqtb
where $\eta \rightarrow +\infty$. The minimum is achieved when:
\eqta\label{eq:F_eta}
\alna
& (\b L^* + \eta \b 1 \b 1\trans) \b F = \b Y \\
\Leftrightarrow &\b F = (\b L + n\mu \b I_n + \eta \b 1 \b 1\trans)^{-1} \b Y.
\alnb\eqtb

In this design, the soft label matrix $\b F$ can be obtained by solving Eq. (\ref{eq:F_eta}). The transformation of achieving $\b F$ from SVD to solving a linear equation enables the use of Gauss elimination.

Gauss elimination, also known as row reduction, is an algorithm for solving systems of linear equations. It consists of a sequence of operations performed on the corresponding matrix of coefficients.

The parameter $\eta \rightarrow + \infty$ makes the equation hard to tackle, so an $\eta \gg 1$ is used instead and leads to an approximate solution which is shown in \t{Algorithm 1}. With Gauss elimination, the time spent is reduced to one-tenth or even less compared with GF.

\begin{algorithm}
\renewcommand{\algorithmicrequire}{\textbf{Input:}}
\renewcommand{\algorithmicensure}{\textbf{Output:}}
\caption{Improved Green-function Method With Gaussian Elimination}
\begin{algorithmic}
\REQUIRE Sample set $\{\b x_1, \b x_2,...,\b x_n\} \subset \mathbb R^d$, a few labeled samples, and their labels $\{ y_1, y_2, ..., y_l \} \subset \mathbb N$.
\STATE Construct the similarity matrix $\b S$ with Guass kernel.
\STATE Choose a very small $\mu$. For example, let $\mu = 1\r{e}-5$. 
\STATE Choose a very large $\eta$. For example, let $\eta = 1\r{e}6$. 
\STATE Solve the linear equation $(\b L + n\mu \b I_n + \eta \b 1 \b 1\trans)\b F = \b Y$.
\STATE Denote $\b F = (\b f_1, \b f_2, ..., \b f_n)\trans$.
\FOR{$i = 1,2,...,n$}
\STATE $\hat y_i = \r{arg} \max\limits_j (\b f_i)_j$
\ENDFOR
\ENSURE Estimated classification $\{\hat y_i \}_{i=1}^n $
\end{algorithmic}
\end{algorithm}

%Anchor
\subsection{Accelerating by Anchored Graph}

%\begin{figure}[!ht]
%\centering
%\includegraphics[width=3.5in]{BKHK_1.pdf}
%\caption{Balanced $k$-means algorithm divides one cluster into two balanced clusters which contain the same number of samples. The division counts on the distances $d$ for all samples.}
%\label{fig:BKHK_1}
%\includegraphics[width=3.5in]{BKHK_2.pdf}
%\caption{BKHK algorithm performs like the binary tree structure, and divides each cluster into two smaller clusters balanced $k$-means algorithm until there are sufficient clusters.}
%\label{fig:BKHK_2}
%\end{figure}

If plugged in with Eq.(\ref{eq:L*}) and the definition of $\b L$, Eq. (\ref{eq:F_eta}) turns into:
\eqta\label{eq:F_eta_mu}
\b F = (\b D - \b S + n\mu \b I - \eta \b 1 \b 1\trans)^{-1} \b Y,
\eqtb
where $\eta \rightarrow +\infty$ and $0 \leqslant \mu \ll 1$ are two parameters. Note that the term $- \mu \b 1 \b 1\trans$ is omitted due to $\eta \rightarrow +\infty$.

Considering the format of Eq. (\ref{eq:F_eta_mu}), we would like to construct $m$ anchor points to fasten the algorithm where $m \ll n$. Anchor points are also located in the sample space $\mathcal X$, so the similarity between a sample and an anchor point could be calculated.

Denote $\b Z \in [0, 1]^{n \times m}$ as the similarity matrix measuring the underlying relationship between samples and anchor points, the element $Z_{il}$ in the $i$-th row, $l$-th column of which describes the metric similarity between the $i$-th sample and the $l$-th anchor. Following the design in \cite{Liu2010LargeGC}, we have
\eqta\label{eq:S_Z}
\b S = \b Z \b \Lambda^{-1} \b Z\trans, \b \Lambda = \r{Diag}(\b Z\trans \b 1_n).
\eqtb

%This ensures the element’s range $S_{ij} \in [0, 1]$, the degree matrix $\b D$ that is identity matrix $\b I_n$, and the doubly-stochastic matrix $\b S$. Although the elements on the diagonal of $\b S$ don’t equal zero, the compute of $\b L = \b D - \b S$ will float the distinction.

We apply the Balanced $k$-means based Hierarchical $k$-means (BKHK) algorithm \cite{Nie2017UnsupervisedLG} to generate the $m$ anchor points in the sample space.

BKHK algorithm is a hierarchical clustering approach to recursively divide one cluster into two balanced clusters which contain the same number of samples.
The sample set is evenly divided into two clusters by the balanced $k$-means algorithm, then each of the two clusters is evenly divided into two smaller clusters.
%Like the structure of the binary tree, each cluster is divided into two balanced until there are sufficient clusters.
%In the process of balanced $k$-means, the closest half of the samples to each cluster center belong to this cluster, and the size of the two clusters is the same so as for further divisions.

%The time complexity of the balanced $k$-means on a cluster of size $n$ is $O(ndt)$ where $d$ denotes the dimension of the space and $t$ denotes the number of times $k$-means converges, so the total time complexity of BKHK is $O(nd\log{(m)}t)$ for recursion on about $\log m$ layers.

BKHK works efficiently to find a given number of clusters, of which the centers can be seen as anchor points to represent the whole samples in the space.

\iffalse
\begin{algorithm} \label{al1}
\renewcommand{\algorithmicrequire}{\textbf{Input:}}
\renewcommand{\algorithmicensure}{\textbf{Output:}}
\caption{Balanced $k$-means Based Hierarchical $k$-means} \label{alg1}
\begin{algorithmic}
\REQUIRE Sample matrix $\b X = (\b x_1, \b x_2,...,\b x_n)\trans \in \mathbb R^{n \times d}$, number of anchor points $m$
\STATE Initialize centers of clusters $\b c_1, \b c_2 \in \mathbb R^{d}$ randomly
\WHILE {not converge}
\FOR {$i=1,2,...,n$}
\STATE calculate $d_i = ||\b x_i - \b c_1||_2^2 - ||\b x_i - \b c_2||_2^2$
\ENDFOR
\FOR {$i=1,2,...,n$}
\IF{$d_i$ is in the first $[n/2]$ smallest of $\{d_l\}_{l=1}^n$}
\STATE add $\b x_i$ into the cluster $C_1$
\ELSE
\STATE add $\b x_i$ into the cluster $C_2$
\ENDIF
\STATE Renew the center of clusters $\b c_1$, $\b c_2$
\ENDFOR
\ENDWHILE
\IF {$m = 2$}
\STATE Store $\b c_1$ and $\b c_2$ as two vectors into $\mathcal P$
\ELSE
\STATE Perform \t{Algorithm 1} on the two sub-clusters $C_1$, $C_2$ respectively with $n$ and $m$ both halved
\ENDIF
\ENSURE Anchor point set $\mathcal P$
\end{algorithmic}
\end{algorithm}
\fi

With the set of anchor points $\mathcal P = \{\b p_1, \b p_2, ..., \b p_m\}$, we’ll construct the matrix $\b Z$ .

In the Nadaraya-Watson kernel regression\cite{hastie2009elements}, the matrix $\b Z$ can be defined as:
$$
Z_{il} = \frac{K(\b x_i, \b p_i)}{\sum_{l’ \in \langle i \rangle} K(\b x_i, \b p_{l’})}, \forall l \in \langle i \rangle,
$$
where $K(\cdot,\cdot)$ denotes an artificial kernel function and $\langle i \rangle \subset [1:m]$ denotes the set of neighbors of the $i$-th sample.

Usually, kernel $K$ is designed as a Gaussian kernel with heat parameter $\sigma$ so that $K(\b x_i, \b p_l) = \r{exp}(-||\b x_i - \b p_i||_2^2/2\sigma^2)$ is in $(0, 1]$. The heating parameter influences the effect of the algorithm based on the graph a lot\cite{Nie2009SemisupervisedOD}, so Nie \textit{et al.}\cite{Nie2016TheCL} proposed a parameter-free strategy by solving the problem:
\eqta\label{eq:min_z}
\min_{\b z_i\trans \b 1 = 1, Z_{il}>0}\sum_{l=1}^m (||\b x_i - \b p_l||_2^2 Z_{il} + \gamma_i {Z_{il}}^2),
\eqtb
where $\b z_i\trans$ denotes the $i$-th row of $\b Z$ and $\gamma_i$ is a parameter that need to be maximized. 

%Assume $k$ as the number of neighbors of each sample for higher performance, then $|\langle i \rangle| = k$ becomes a sparse constraint for $\gamma_i$. To compute $\gamma_i$, notate the distance as $e_{il} = ||\b x_i - \b p_l||_2^2$, the distance set as $\mathcal E_i = \{e_{i1}, e_{i2}, ..., e_{im}\}$ and the $j$-th smallest element in the set $\mathcal E_i$ as $\mathcal E_{i,(j)}$. The maximal $\gamma_i$ can be computed with the constraint as
%\eqta\label{eq:gamma}
%\gamma_i = \frac{k}{2} \mathcal E_{i,(k+1)} - \frac{1}{2} \sum_{j=1}^k \mathcal E_{i,(j)}.
%\eqtb

Following the detailed derivation in \cite{Nie2016TheCL}, problem (\ref{eq:min_z}) can be solved and $Z_{il}$ can be obtained as:
\eqta\label{eq:Z}
Z_{il} = \left\{ \alna &\frac{\mathcal E_{i,(k+1)}-e_{il}}{k\mathcal E_{i,(k+1)}-\sum_{j=1}^k \mathcal E_{i,(j)}},&& e_{il} < \mathcal E_{i,(k+1)} & \\ &0,&& \r{otherwise}& \alnb \right.,
\eqtb
when notating the distance as $e_{il} = ||\b x_i - \b p_l||_2^2$, the distance set as $\mathcal E_i = \{e_{i1}, e_{i2}, ..., e_{im}\}$ and the $j$-th smallest element in the set $\mathcal E_i$ as $\mathcal E_{i,(j)}$

It is of great benefit to construct $\b S$ with Eq.(\ref{eq:S_Z}) and Eq.(\ref{eq:Z}). Firstly, Eq. (\ref{eq:Z}) means fewer hyper-parameters, more efficient computation, and a scale-invariant result. Secondly, Eq. (\ref{eq:S_Z}) means the degree matrix $\b D$ equals to $\b I_n$ so that the Laplacian matrix $\b L$ equals the symmetric normalized Laplacian matrix $\b L_{sn} = \b D^{-1/2} \b L \b D^{-1/2}$ that not only works well for regular but also irregular graphs\cite{spielman2012spectral}. At last, the graph also satisfies the principles proposed in \cite{Liu2010LargeGC}.

For the convenience of following derivation, we denote Anchored Graphs matrix $\b B$ as $\b B = \b Z \b \Lambda^{-1/2}$ and rewrite Eq. (\ref{eq:S_Z}) into
$$
\b S = \b B \b B\trans,
$$
where $\b \Lambda^{-1/2}$ means the result of writing each element in the diagonal matrix $\b \Lambda$ into its reciprocal square root.

\iffalse
As is shown in \t{Algorithm 2}, to construct Anchored Graphs with the given anchor set $\mathcal P$ costs a time complexity of $O(nmd)$.
\begin{algorithm}
\renewcommand{\algorithmicrequire}{\textbf{Input:}}
\renewcommand{\algorithmicensure}{\textbf{Output:}}
\caption{Construct Anchored Graphs $\b B$} \label{alg2}
\begin{algorithmic}
\REQUIRE Anchor set $\mathcal P = \{\b p_1, \b p_2, ..., \b p_m\}$, the number of nearest neighbors $k$
\FOR {$i = {1,2,...,n}$}
\FOR{$l = {1,2,...,m}$}
\STATE $e_{il} = ||\b x_i - \b p_l||_2^2$
\ENDFOR
\STATE Denote the set $\mathcal E_i$ as $\{e_{i1}, e_{i2}, ..., e_{im}\}$
\STATE Denote the $j$-th smallest element in $\mathcal E_i$ as $\mathcal E_{i,(j)}$
\STATE Calculate $Z_{il}$ as shown in Eq, (\ref{eq:Z})
\ENDFOR
\STATE $\b \Lambda = \r{Diag} (\b Z\trans \b 1)$, $\b B = \b Z \b \Lambda ^{-1/2}$
\ENSURE Anchored Graphs $\b B \in \mathbb R^{n \times m}$
\end{algorithmic}
\end{algorithm}
\fi

After obtaining the Anchored Graph $\b B$, Eq. (\ref{eq:F_eta_mu}) thus turns into
\eqta\label{eq:F_inv_n}
\alna
\b F &= (\b D - \b B \b B\trans + n\mu \b I + \eta \b 1 \b 1\trans)^{-1} \b Y \\
&= ((\b D + n\mu \b I) + \begin{bmatrix} \b B & \b 1_n \end{bmatrix} \begin{bmatrix} -\b I_m & \b 0 \\ \b 0\trans & \eta \end{bmatrix} \begin{bmatrix} \b B\trans \\ \b 1_n\trans\end{bmatrix})^{-1} \b Y,
\alnb \eqtb
which can be applied with Penrose’s matrix inversion lemma\cite{Penrose1955AGI}.

Penrose’s matrix inversion lemma allows us to efficiently find an approximation of the inverse of the matrix $\b A + \b B$ where the matrix $\b B$ can be approximated by a low-rank matrix $\b U \b C \b V$. When the matrixes $\b A$, $\b C$ and $\b A + \b U \b C \b V$ are all invertible, it proves that
$$
(\b A + \b U \b C \b V)^{-1} = \b A^{-1} - \b A^{-1} \b U (\b C^{-1} + \b V \b A^{-1} \b U)^{-1} \b V \b A^{-1}.
$$

For the sake of convenience, denote the diagonal matrix $\b D_\r{ir}$ as $\b D + n\mu \b I$. It is obviously that Eq. (\ref{eq:F_inv_n}) satisfies the requirements and can be transformed into:
\eqta\label{eq:F_inv_m}
\b F = (\b D_\r{ir}^{-1} - \b D_\r{ir}^{-1} \begin{bmatrix} \b B & \b 1_n \end{bmatrix} \b M^{-1} \begin{bmatrix} \b B\trans \\ \b 1_n\trans\end{bmatrix} \b D_\r{ir}^{-1} ) \b Y,
\eqtb
where $\b M = \begin{bmatrix} \b B\trans \b D_\r{ir}^{-1} \b B - \b I_m & \b B\trans \b D_\r{ir}^{-1} \b 1 \\ \b 1\trans \b D_\r{ir}^{-1} \b B & \b 1\trans \b D_\r{ir}^{-1} \b 1 + 1/\eta \end{bmatrix}$ and $\b D_\r{ir} = \b D + n\mu \b I_n$.

Noticing that $\eta$ tends to infinity and $\b D$ equals to $\b I_n$ in the graph, there’s $\b D_\r{ir} = (1+n\mu) \b I_n$, so Eq. (\ref{eq:F_inv_m}) becomes:
$$
\b F = \frac{1}{\theta} (\b Y - \begin{bmatrix} \b B & \b 1_n \end{bmatrix} \begin{bmatrix} \b B\trans \b B -\theta \b I_m & \b B\trans \b 1_n \\ \b 1_n\trans \b B & n \end{bmatrix}^{-1} \begin{bmatrix} \b B\trans \\ \b 1_n\trans\end{bmatrix} \b Y),
$$
where $\theta = 1 + n\mu$. The coefficient $\frac{1}{\theta}$ won’t affect the classification results, so we directly compute $\b F$ as:
\eqta\label{eq:F_result}
\b F = \b Y - \begin{bmatrix} \b B & \b 1_n \end{bmatrix} \begin{bmatrix} \b B\trans \b B -\theta \b I_m & \b B\trans \b 1_n \\ \b 1_n\trans \b B & n \end{bmatrix}^{-1} \begin{bmatrix} \b B\trans \\ \b 1_n\trans\end{bmatrix} \b Y.
\eqtb

The time complexity of the fastened algorithm shown in \t{Algorithm 2} is reduced to $\r{O}(nd\log m+nm^2)$, where $n$ denotes the number of all samples, $m$ denotes the number of anchor points when the number of classes is much smaller than the number of anchor points.

\begin{algorithm}
\renewcommand{\algorithmicrequire}{\textbf{Input:}}
\renewcommand{\algorithmicensure}{\textbf{Output:}}
\caption{Anchored Improved Green-function Method}
\begin{algorithmic}
\REQUIRE \REQUIRE Sample set $\{\b x_1, \b x_2,...,\b x_n\} \subset \mathbb R^{d} $, a few labeled samples, their labels $\{ y_1, y_2, ..., y_l \} \subset \mathbb N$, and the number of anchor points $m$.
\STATE Generate $m$ anchor points with BKHK and denote them as $\{\b p_1, \b p_2, ..., \b p_m\}$.
\FOR {$i = {1,2,...,n}$}
\FOR{$l = {1,2,...,m}$}
\STATE $e_{il} = ||\b x_i - \b p_l||_2^2$
\ENDFOR
\STATE Denote the set $\mathcal E_i$ as $\{e_{i1}, e_{i2}, ..., e_{im}\}$.
\STATE Denote the $j$-th smallest element in $\mathcal E_i$ as $\mathcal E_{i,(j)}$.
\STATE Calculate $Z_{il}$ as shown in Eq, (\ref{eq:Z}).
\ENDFOR
\STATE Calculate $\b B = \b Z \b \Lambda ^{-1/2}$ where $\b \Lambda = \r{Diag} (\b Z\trans \b 1)$.
\STATE Calculate $\b \theta = 1 + n\mu$.
\STATE Use either Eq.(\ref{eq:y1}) or Eq. (\ref{eq:y2}) to obtain $\b Y$.
\STATE Compute $\b F = (\b f_1, \b f_2, ..., \b f_n)\trans$ as shown in Eq. (\ref{eq:F_result})
\FOR{$i = 1,2,...,n$}
\STATE $\hat y_i = \r{arg} \max\limits_j (\b f_i)_j$
\ENDFOR
\ENSURE Estimated classification $\{\hat y_i \}_{i=1}^n $
\end{algorithmic}
\end{algorithm}

The whole algorithm has three parts:

1) To generate the $m$ anchor points from the features of samples with BKHK needs a time complexity of $O(nd \log m)$ and a space complexity of $O(nm)$.

2) To construct Anchored Graphs $\b B$ with anchor points and samples needs a time complexity of $O(nmd)$ and a space complexity of $O(nm)$.

3) To compute $\b F$ and classify all the samples needs a time complexity of $O(nm^2)$ and a space complexity of $O(nm)$.

To sum the above up, the algorithm has a total time complexity of $\r{O} (nd \log m + nm(m+d))$ and a total space complexity of $O(nm)$, making it possible to apply the algorithm on a large graph.

% Compare
\section{Relationship With LLGC} \label{sec:Compare}
\noindent Our approach looks a little similar to but actually different from LLGC\cite{Zhou2003LearningWL}. As is shown in Fig. \ref{fig:Contri}, we will talk about their relationship in two aspects of the physical meaning and the outcome, which respectively lead to Conclusion 4 and 5.

In the aspect of the physical meaning, we use the \textit{label margin rule} instead of the \textit{fitting constraint} in LLGC. The \textit{fitting constraint} would require the soft labels $f$ of those labeled positive samples whose label $y$ is $+1$ to be close to 1, which may decrease the corresponding soft label $f$ when $f$ is large. Likewise, for the labeled negative samples, the margin between positive and negative samples cannot be maximized. However, our proposed rule always tries to push positive and negative samples away from each other and maximizes the margin between them.

For unlabeled samples, the label’s value $y$ equals 0. It is strange to require soft label $f$ of unlabeled samples close to the label’s value $y$ which equals $0$. Some algorithms \cite{He2021FastSL, Wang2022SemisupervisedLV} introduce regularization parameters to deal with the condition. With regularization parameters $\beta$, problem (\ref{eq:LLGC}) turns into:
$$
\min_{\b F} \sum_{i,j=1}^n S_{ij} || \b f_i - \b f_j ||_2^2 + \sum_{i=1}^n \beta_i ||\b f_i - \b y_i ||_2^2,
$$
where $\beta_i$ is the regularization parameter associated with the $i$-th sample.

\t{Conclusion 4.} Unlike LLGC and related approaches, \textit{label margin rule} doesn’t need to introduce additional parameters or set some of them to 0. \textit{label margin rule} automatically removes the effect of unlabeled samples on the loss.

In the aspect of the outcome, it can be proved that our approach is equivalent to LLGC in some cases. Looking at the solution Eq. (\ref{eq:F_LLGC}) of LLGC and the solution Eq. (\ref{eq:F_eta_mu}) of our approach, it could be found that they all contain a component of $\b L$ added by several times the identity matrix. In the other word, they can be expressed as:
\eqta\label{eq:compare}
\alna
&\b F_\r{LLGC} = (\b L + \gamma \b I)^{-1} \b Y, \\
&\b F_\r{ours} = (\b L + n \mu \b I + \eta \b 1 \b 1\trans)^{-1} \b Y,
\alnb
\eqtb
where $\mu \ll 1$ and $\eta \rightarrow + \infty$.

Let’s use the definitions in Eq. (\ref{eq:L_svd}) again to analyze the eigenvalues of $\b L$. Noticing that $\b I = \sum_{i=1}^n \b u_i \b u_i\trans$, $\sigma_1 = 0$ and $\b u_1 = \frac{1}{\sqrt n} \b 1$, Eq. (\ref{eq:compare}) will become:
\eqta
\alna
\b F_\r{LLGC} &= (\sum_{i=2}^n {\sigma_i \b u_i \b u_i\trans} + \gamma \sum_{i=1}^n \b u_i \b u_i\trans)^{-1} \b Y \\
&= (\frac{1}{n\gamma} \b 1 \b 1\trans + \sum_{i=2}^n {\frac{1}{\sigma_i + \gamma} \b u_i \b u_i\trans}) \b Y,
\alnb
\eqtb
and
\eqta
\alna
\b F_\r{ours} &= (\sum_{i=2}^n {\sigma_i \b u_i \b u_i\trans} + n\mu \sum_{i=1}^n \b u_i \b u_i\trans + n\eta \b u_1 \b u_1\trans)^{-1} \b Y \\
&= (\frac{1}{n^2 (\mu + \eta)} \b 1 \b 1\trans + \sum_{i=2}^n {\frac{1}{\sigma_i + n\mu} \b u_i \b u_i\trans}) \b Y
\alnb
\eqtb

In many cases, we will choose the same number of labeled samples in each class, which means $\b 1\trans \b Y = (l/c) \b 1\trans,$
where $l/c$ is a positive integer that denotes the number of labeled samples for each class. So when $\gamma$ equals to $n\mu$, we get
$$
\b F_\r{LLGC} - \b F_\r{ours} = \frac{l}{c} (\frac{1}{n\gamma}-\frac{1}{n^2 (\mu + \eta)}) \b 1 \b 1\trans,
$$
which proves the equivalence between the two methods.

\t{Conclusion 5.} When $\gamma$ equals $n \mu$ and there is the same number of labeled samples in each class, our method is equivalent to LLGC. However, our method only needs to choose the $\mu$ much less than the edge weights yet greater than zero to guarantee stability, while LLGC needs to adjust the parameter $\gamma$.

%Experiemnts
%\input{Experiments.tex}
\section{Experiments} \label{sec:experiment}

\noindent In this section, we conduct experiments to prove the efficiency of our approach and our conclusions as follows:
1) Our proposed method can achieve similar or even better results to GF and runs much faster, especially when there’re fewer labeled samples.
2) It’s irrational to apply the original Green-function method to non-fully connected graphs.
3) The label margins of our proposed method are bigger than those of LLGC\cite{Zhou2003LearningWL}.
4) Anchor technique can not only reduce the time and space complexity but usually make the method perform better as well.

To count the run time, all experiments are performed on a Windows 10 computer with a 3.60GHz Intel(R) Core(TM) i7-7700 CPU and 32.0 GB RAM, python 3.6.

%acc
\subsection{Experiments on Classification Accuracy and Time Cost} \label{ex:acc}

\begin{table}[b]
\centering
\caption{The Description of Six Datasets} \label{t:dataset}
\begin{center}
\begin{tabular}{ccccc}
\hline
\t{Dataset} & \t{Samples} & \t{ Classes} & \t{Dimensions} & \t{Anchors} \\ 
\hline
Balance & 625 & 3 & 4 & 21\\
MobileKSD & 2,856 & 56 & 71 & 2,856\\
USPS & 9,298 & 10 & 256 & 1,024 \\
CsMap & 10,846 & 6 & 29 & 1,024 \\
PhishingWeb & 11,055 & 2 & 30 & 50\\ 
Swarm & 24,016 & 4 & 2400 & 1024 \\ \hline
\end{tabular}
\end{center}
\end{table}

\begin{table*}[t]
\centering
\caption{Accuracy(\%) ± Standard Deviation(\%) of Different Approaches on Six Datasets\\ (“OM” Means “Out-of-memory Error”)} \label{t:acc}
\begin{center}
\begin{tabular}{cccccc}
\hline
\t{Dataset} & \t{LLGC} & \t{HF} & \t{GF} & \t{GF(G)} & \t{GF(A)} \\ \hline		
Balance&65.78±3.61&66.72±5.73&64.87±1.26&64.87±1.26&\t{69.71±2.39}\\
MobileKSD&34.04±0.83&35.23±0.89&31.45±0.71&31.45±0.71&\t{35.78±0.52}\\
USPS&91.14±0.71&89.14±1.07&\t{91.61±0.46}&\t{91.61±0.46}&89.69±1.02\\
CsMap&58.94±1.40&54.76±3.07&54.70±4.47&54.70±4.47&\t{59.63±1.09}\\
PhishingWeb&69.90±3.92&56.40±6.14&65.22±6.13&64.98±6.05&\t{72.71±9.84}\\
Swarm&93.76±2.68&92.70±2.18&OM&86.31±5.56&\t{96.58±0.89}\\\hline
\end{tabular}
\end{center}
\end{table*}

\begin{table}[t]
\centering
\caption{F1-macro of Different Approaches on Six Datasets (“OM” Means “Out-of-memory Error”)} \label{t:F1}
\begin{center}
\begin{tabular}{cccccc}
\hline
\t{Dataset} & \t{LLGC} & \t{HF} & \t{GF} & \t{GF(G)} & \t{GF(A)} \\ \hline	
Balance&0.6148&\t{0.6152}&0.5937&0.5937&0.5941\\
MobileKSD&0.3305&0.3506&0.3119&0.3119&\t{0.3513}\\
USPS&0.9052&0.8883&\t{0.9108}&\t{0.9108}&0.8878\\
CsMap&0.648&0.6297&0.6025&0.6025&\t{0.6545}\\
PhishingWeb&0.7004&0.6033&0.6581&0.6562&\t{0.7232}\\
Swarm&0.9205&0.9131&OM&0.8594&\t{0.9415}\\\hline
\end{tabular}
\end{center}
\end{table}

\begin{table}[t]
\centering
\caption{Time Cost(Seconds) of Different Approaches on Six Datasets (“OM” Means “Out-of-memory Error”)} \label{t:time}
\begin{tabular}{cccccc}
\hline
\t{Dataset} & \t{LLGC} & \t{HF} & \t{GF} & \t{GF(G)} & \t{GF(A)} \\ \hline
Balance&0.03&\t{0.02}&0.13&\t{0.02}&0.26\\
MobileKSD&0.86&0.59&7.37&\t{0.44}&0.62\\
USPS&16.86&16.75&140.86&7.18&\t{1.24}\\
CsMap&273.6&235.99&211.88&10.65&\t{0.87}\\
PhishingWeb&26.99&26.76&223.01&11.7&\t{0.83}\\
Swarm&311.14&283.44&OM&98.22&\t{20.48}\\\hline
\end{tabular}
\end{table}

\noindent We use 6 real-world datasets containing various numbers of samples from various domains to evaluate the approach. The number of samples, the number of classes, and the dimension of the sample space are listed in TABLE \ref{t:dataset}. Their sizes are no more than 30,000.

1) Balance Scale\cite{Dua:2019}. It is an abstract dataset for psycho logical experiments. Each example is to imagine a scale when weights and distances from the center of the two objects are known. The 4 attributes: the left weight, the left distance, the right weight, and the right distance are used to judge whether the scale is left-leaning. right-leaning or balanced. It’s a small sized dataset containing 625 samples. We’ll call it Balance for short.

2) MEU-Mobile KSD\cite{Dua:2019}. It contains keystroke dynamics data collected on a touch mobile device (Nexus 7). The sample consists of 71 features such as Hold (H), Pressure (P), Finger Area (A), and so on. It’s a small-sized dataset containing 2856 records, 51 records per class for 56 classes. We call it MobileKSD for short.

3) USPS\cite{Hull1994ADF}.. It is a digit dataset that contains 9,298 16×16 pixel grayscale samples of handwritten digits. It covers 10 classes and each class has 708-1,553 images. We use it as a medium-sized dataset.

4) Crowdsourced Mapping\cite{johnson2016integrating}. It contains satellite imagery and georeferenced polygons which need to be classified into 6 different kinds of land covers. In total, there are 10,846 samples but we calculate accuracty using only 300 samples from the test set because the others contains noise. We call it CsMap for short.

5) Phishing Websites\cite{Dua:2019}. The samples it contains are extracted from websites. 30 attributes have been summarized to determine whether it is a phishing website. It’s a medium-sized dataset containing 11,055 websites. We’ll call it PhishingWeb for short.

6) Swarm Behaviour\cite{Dua:2019}. It is a dataset about swarm behavior. 200 boids' features are summarized as 2400 attributes. Each situation needs to classify whether boids are aligned and whether boids are grouped. There're 24017 situations. We call it Swarm for short.

In addition to the original Green-function method and the two improved versions proposed in this paper, we also chose LLGC and HF as the experimental targets among graph-based semi-supervised learning methods. The parameters and details of them are shown as follows:

1) GF. The similarity matrix S in the original Green-function method is constructed by using the Gaussian kernel and k-nearest neighbor algorithm like
\eqta\label{eq:S_rbf}
S_{ij} = \exp (-\frac{1}{2\sigma^2} ||\b x_i - \b x_j||_2^2), j \in \langle i \rangle \; \r{or} \; i \in \langle j \rangle,
\eqtb
where $2 \sigma^2$ is chosen to be $\r{var}_{||\b x_i - \b x_j||_2^2 \neq 0}(||\b x_i - \b x_j||_2^2) / d$ to avoid overfit or underfitaverage of all the distances and $\langle i \rangle$ denotes the set of neighbors of the $i$-th sample of which the size is chosen to be $20$. 
It performs badly on non-fully connected graphs and is shown in Section \ref{ex:non-fully}, so we add the perturbation of $\mu$ to the whole graph after it is constructed. For 
Iterative methods to compute the pseudo-inverse always converge to trivial solutions when the size of $S$ is large. So we apply SVD to obtain the pseudo-inverse.
 
2) GF(G). The Green-function method using Gause elimination also takes the graph constructed by Eq. (\ref{eq:S_rbf}) and meets out-of-memory error on large-sized datasets.
The perturbation of $\mu = 1\r{e}-5$ is added while the parameter $\eta \gg 1$ is set to be $1\r{e}6$ which is large enough but won't cause data overflow within 32-bit float type. 

3) GF(A). The Green-function method on the Anchored Graphs uses the same number of neighbors which means $k = 20$. Anchored Graphs need to generate several anchor points. For datasets that have fewer classes, fewer anchor points bring high performance; and vice versa. The number of anchor points we generated is also shown in TABLE \ref{t:dataset}. More experiments about anchor points are shown later in Section \ref{ex:anchors}.

4) LLGC\cite{Zhou2003LearningWL}. Learning with Local and Global Consistency has been introduced in section \ref{llgc}. The parameter of $\gamma$ is set to be $1$. The graph it needs is constructed as that of GF.

5) HF\cite{Zhu2003SemiSupervisedLU}. Semi-Supervised Learning Using Gaussian Fields and Harmonic Functions is to acquire a thermodynamic equilibrium when treating labeled samples as heat sources. We call it HF for short. The graph it needs is constructed as that of GF.

We compare the approaches from the perspective of accuracy, stability, efficiency, precision, and recall.

\begin{figure*}[b]
\centering
\subfigure[Balance]{
\includegraphics[width=0.3\textwidth]{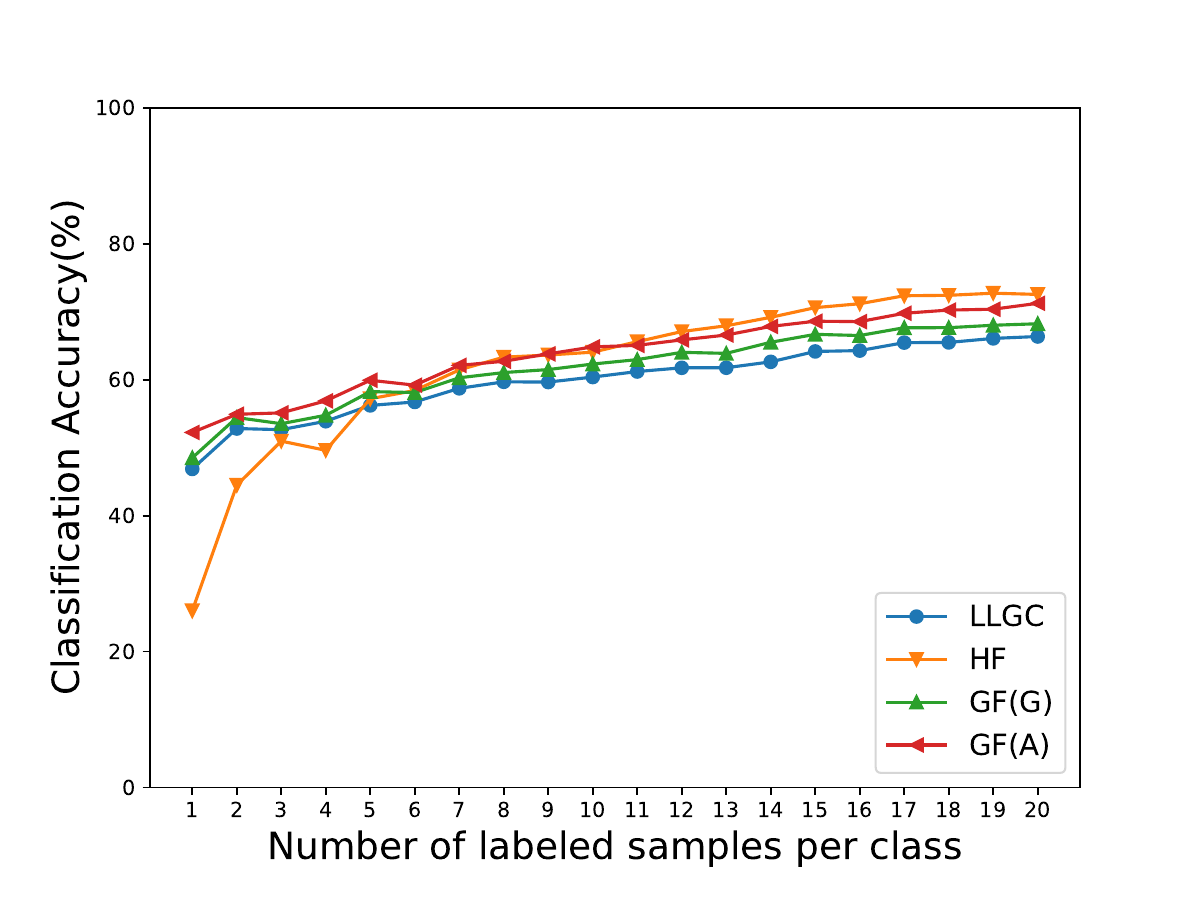}
}
\subfigure[MobileKSD]{
\includegraphics[width=0.3\textwidth]{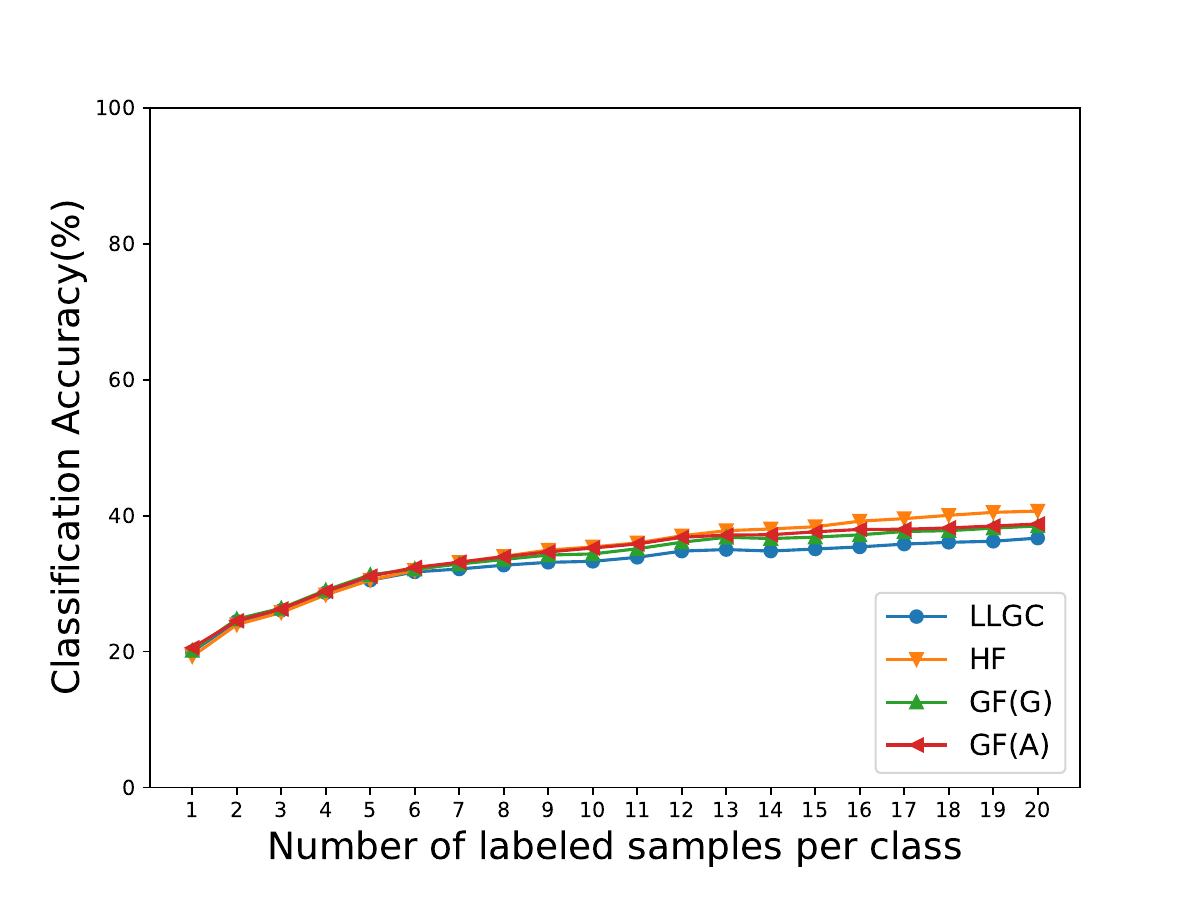}
}
\subfigure[USPS]{
\includegraphics[width=0.3\textwidth]{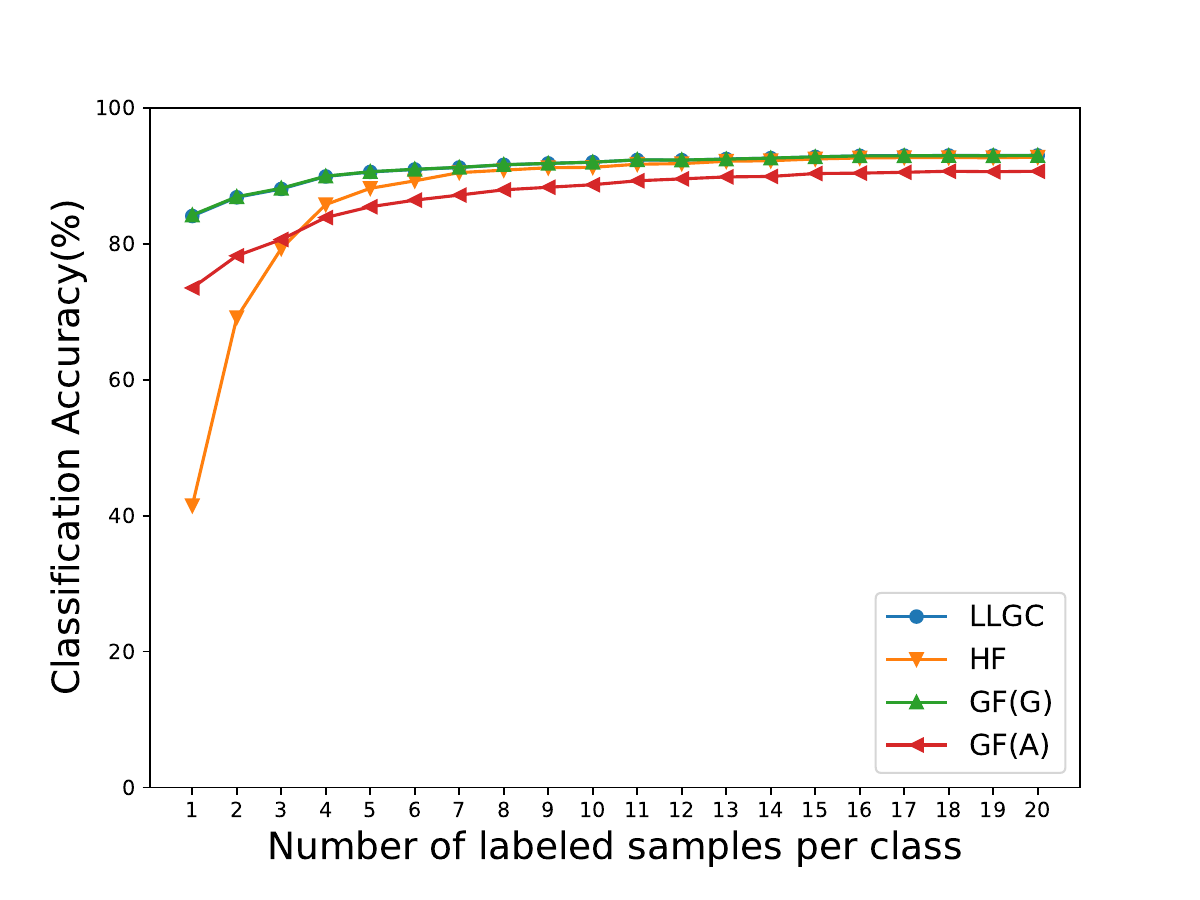}
}
\subfigure[CsMap]{
\includegraphics[width=0.3\textwidth]{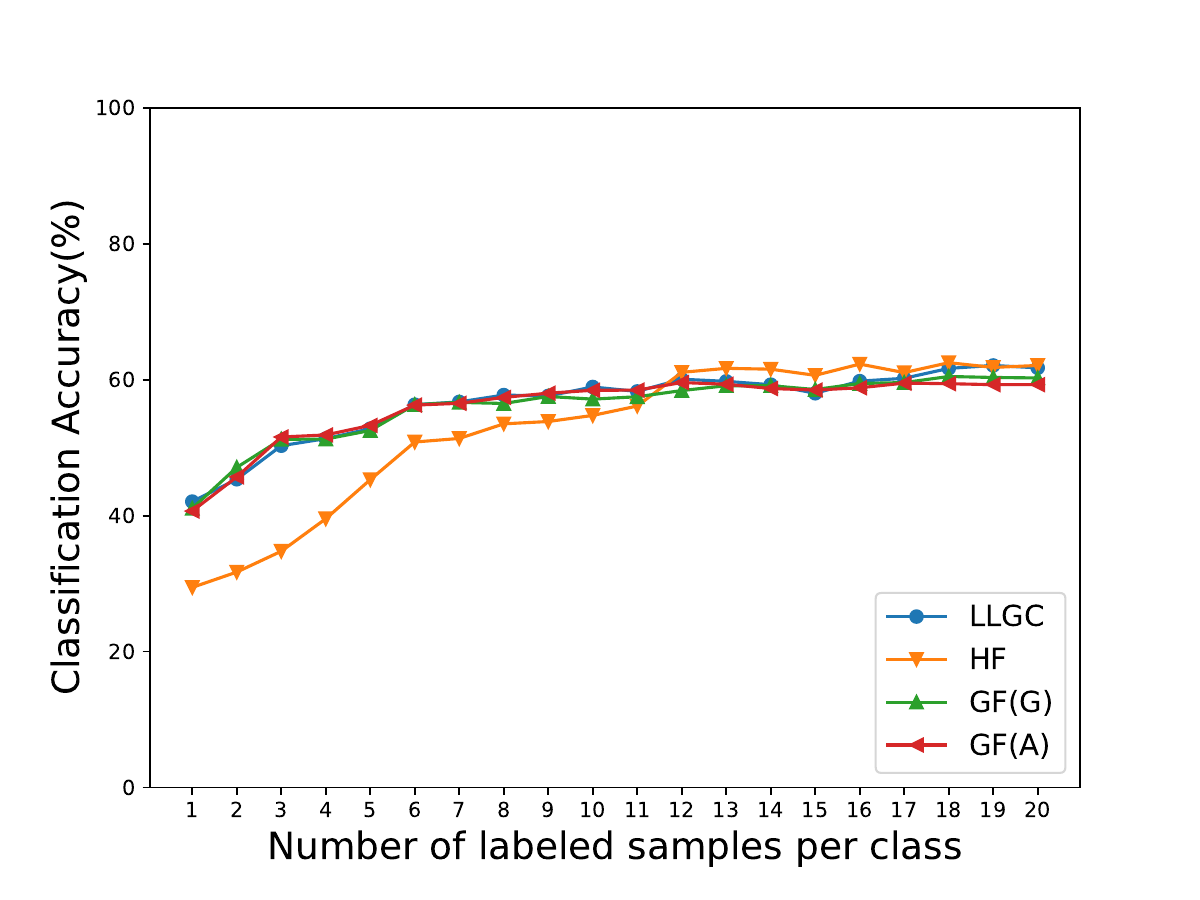}
}
\subfigure[PhishingWeb]{
\includegraphics[width=0.3\textwidth]{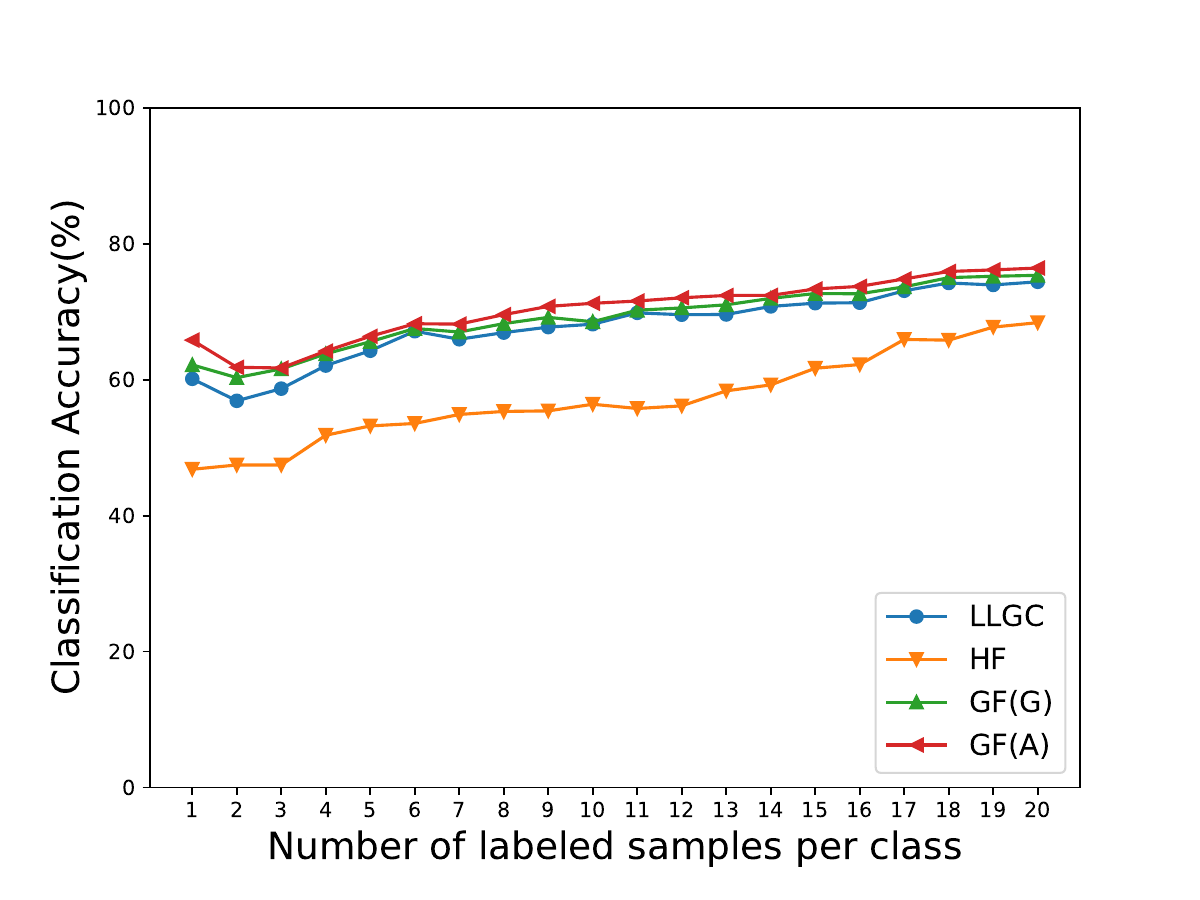}
}
\subfigure[Swarm]{
\includegraphics[width=0.3\textwidth]{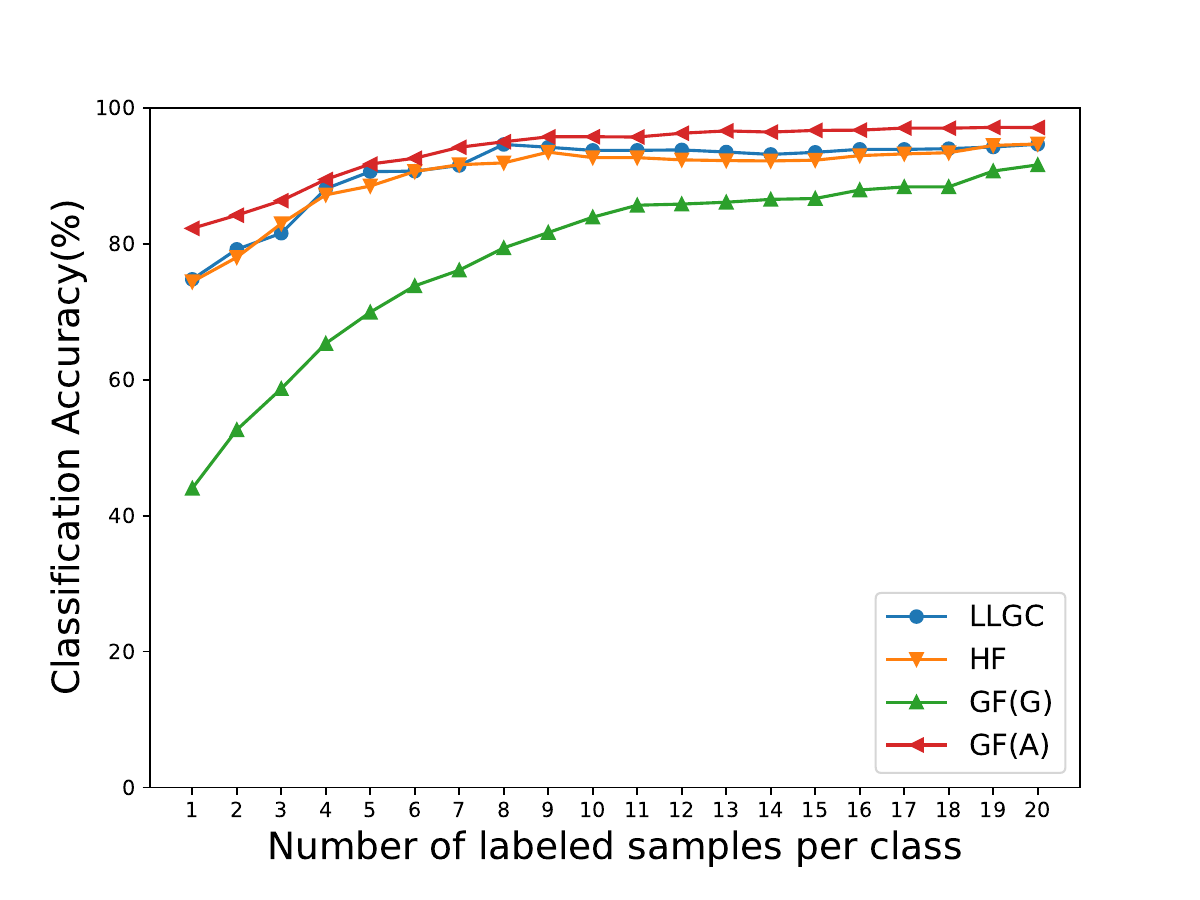}
}
\caption{Average classification accuracy versus the number of labeled samples per class.}
\label{fig:acc-label}
\end{figure*}

As is shown in TABLE \ref{t:acc}, classification accuracy is used to measure the performance of approaches and is the average of those of 5 trials, of which the standard deviation measures stability. In each trial, 10 samples for each class are randomly selected and labeled, that is, $l = 10c$. Only the unlabeled samples count. The highest accuracy is bolded. In the experiments, GF and GF(G) are validated to have almost the same result, which verifies \t{Conclusion 1}. There are subtle differences due to the numerical instability of the SVD
algorithm. Because of the different ways of constructing the graph, GF(A) has different but close results to the other two.

Following \cite{He2021FastSL, xu2019multi}, we use F1-macro to measure the precision and recall of these multi-classification problems. It shows up as
$$
\r{F1-macro} = \frac{2 \times \r{precision}_\r{macro} \times \r{recall}_\r{macro}}{\r{precision}_\r{macro} + \r{recall}_\r{macro}}.
$$
where $\r{precision}_\r{macro}$ and $\r{recall}_\r{macro}$ are respectively the average of precision and recalls of $c$ binary problems generated by the one-aganist-the rest strategy in a $c$-class classification.
The result is in TABLE \ref{t:F1} and the best ones are bolded.

In TABLE \ref{t:time}, efficiency is measured by the average time costs. The time costs cover the construction of graphs and the computing procedure. For all datasets, GF(G) is always faster than GF, proving the effectiveness of Gauss Elimination. The time cost of GF(A) does not have significant advantages for small-sized datasets because of the additional procedure of BKHK, while is quite faster than all the other semi-supervised algorithms for datasets containing more than 10,000 samples.

According to the three tables mentioned above, we could find that our proposed methods GF(G) and GF(A) achieve comparable performance and cost much less time.

In this part, we also conduct experiments to validate the equivalence between coding ways and achieve the same results, which verifies \t{Conclusion 2}.

When there are fewer labeled samples, our proposed method can still work well. For all datasets shown in Fig. \ref{fig:acc-label}, we set the number of labeled samples per class from 1 to 20 and evaluate them in terms of average classification accuracy. Each point in the figures represents the average accuracy of 5 trials with the labeled samples chosen randomly. According to common sense, accuracy increases as more samples are labeled.

%Non-fully
\subsection{Experiments on Non-fully Connected Graphs}\label{ex:non-fully}

\begin{figure*}[t]
\centering
\subfigure[Ground Truth]{
\includegraphics[width=0.23\textwidth]{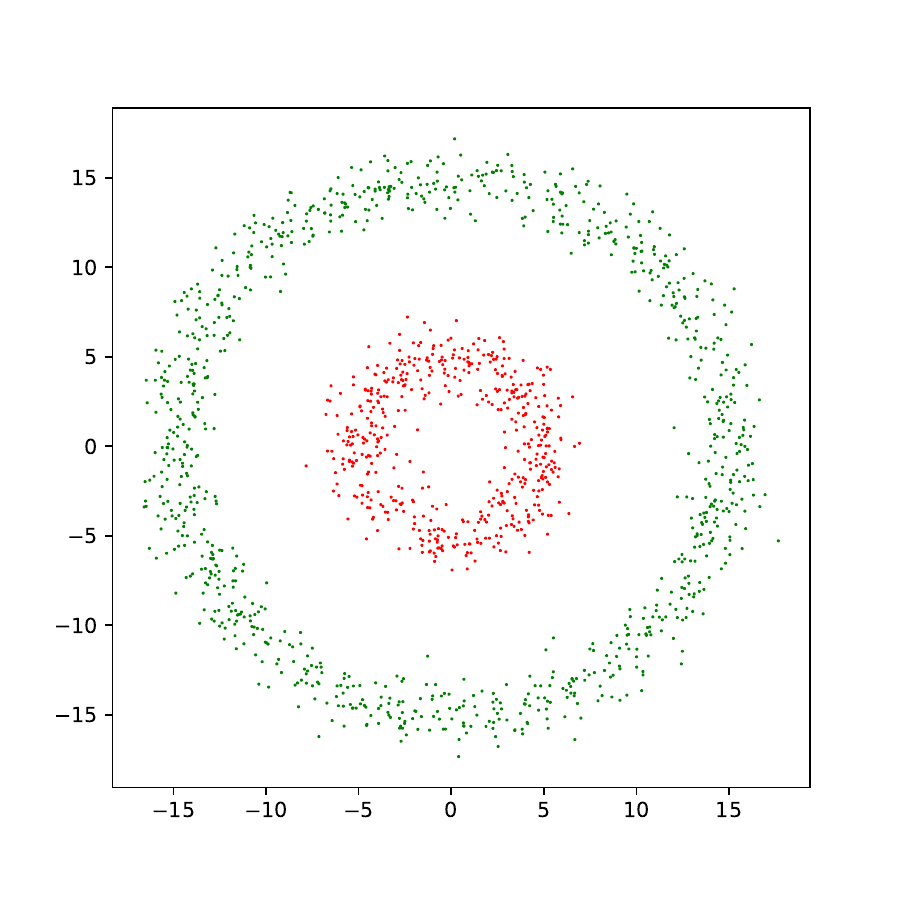}
}
\subfigure[Connections]{
\includegraphics[width=0.23\textwidth]{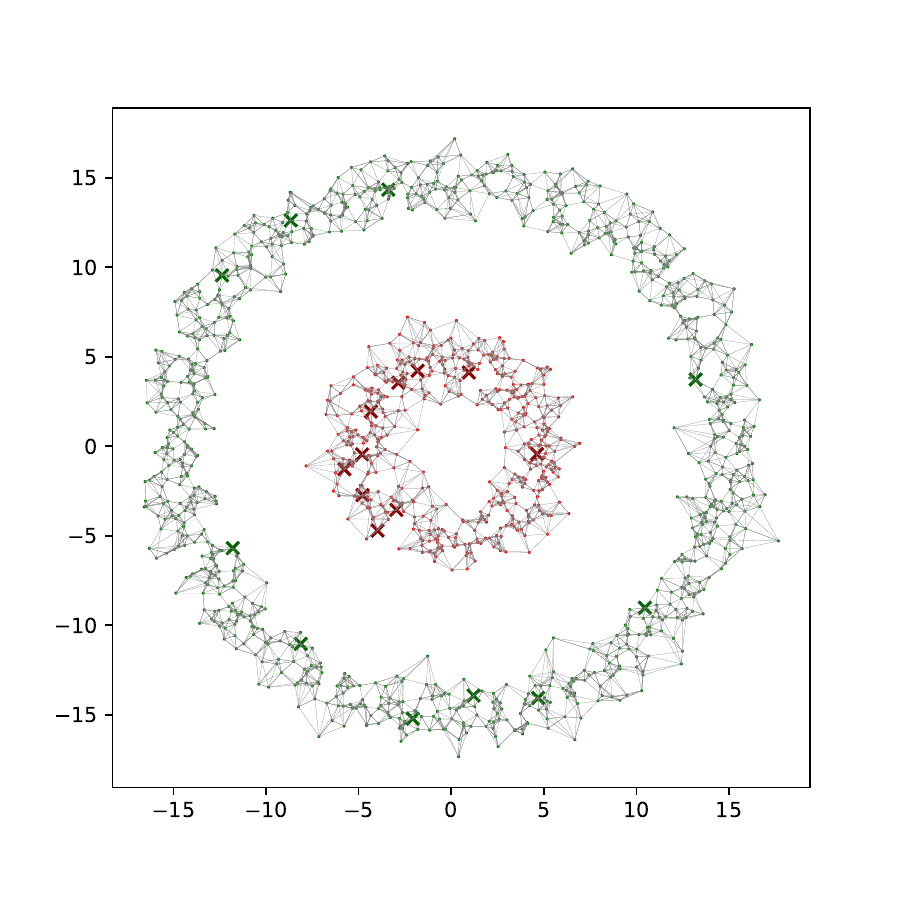}
}
\subfigure[Original GF]{
\includegraphics[width=0.23\textwidth]{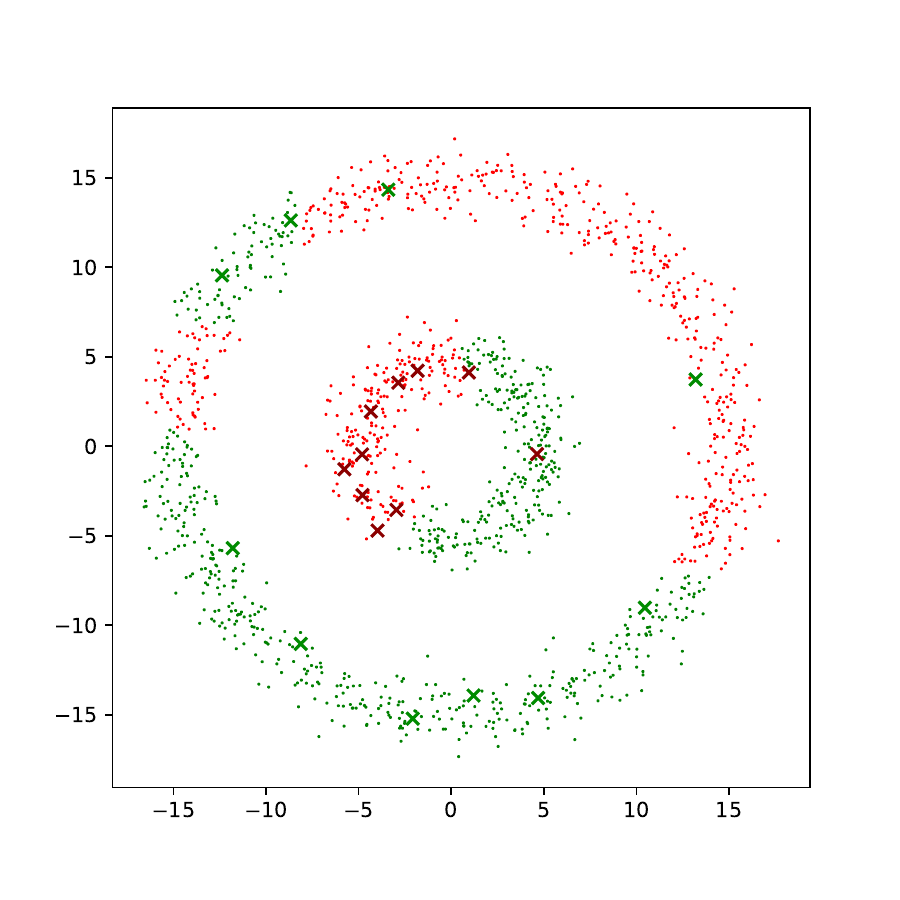}
}
\subfigure[Improved GF]{
\includegraphics[width=0.23\textwidth]{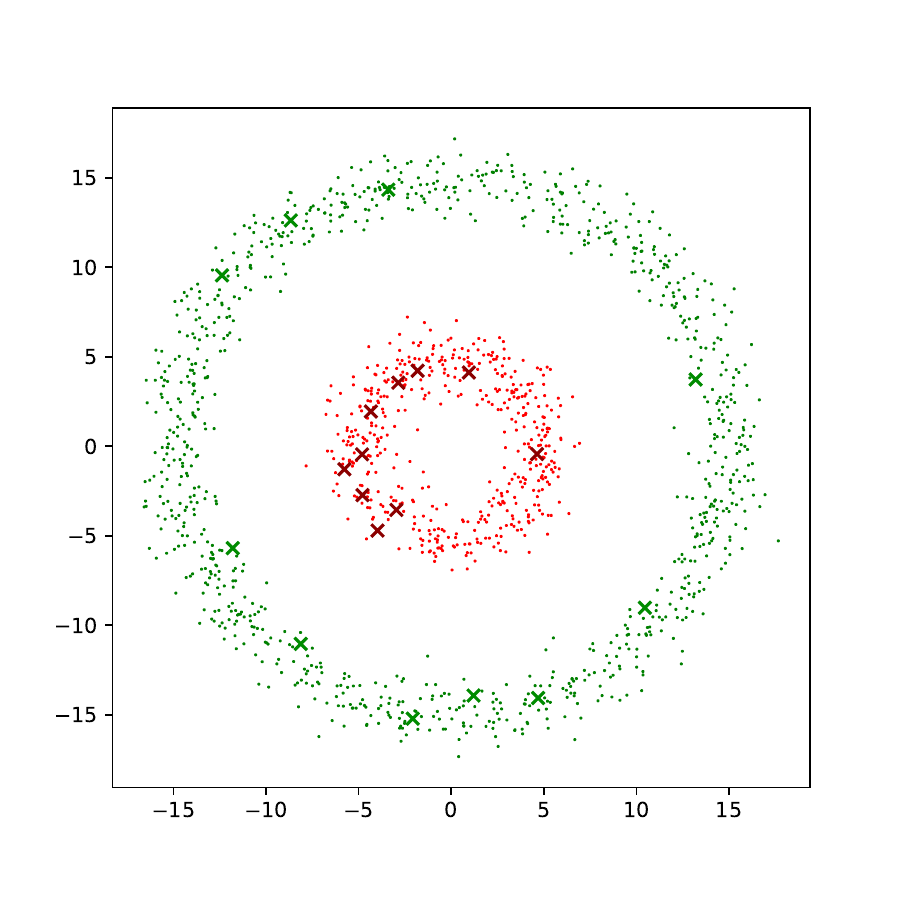}
}
\caption{Two-Ring Dataset Which is Non-fully Connected.}
\label{fig:Two-Ring}
\end{figure*}

\begin{figure*}[t]
\centering
\subfigure[Ground Truth]{
\includegraphics[width=0.23\textwidth]{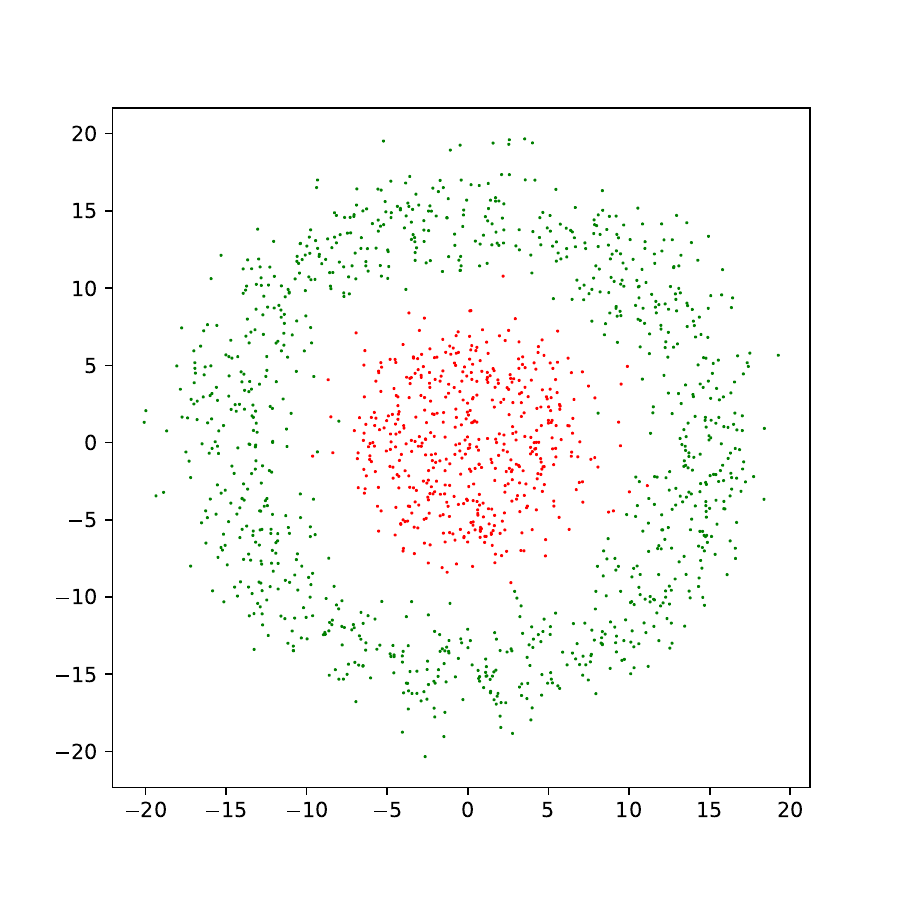}
}
\subfigure[Connections]{
\includegraphics[width=0.23\textwidth]{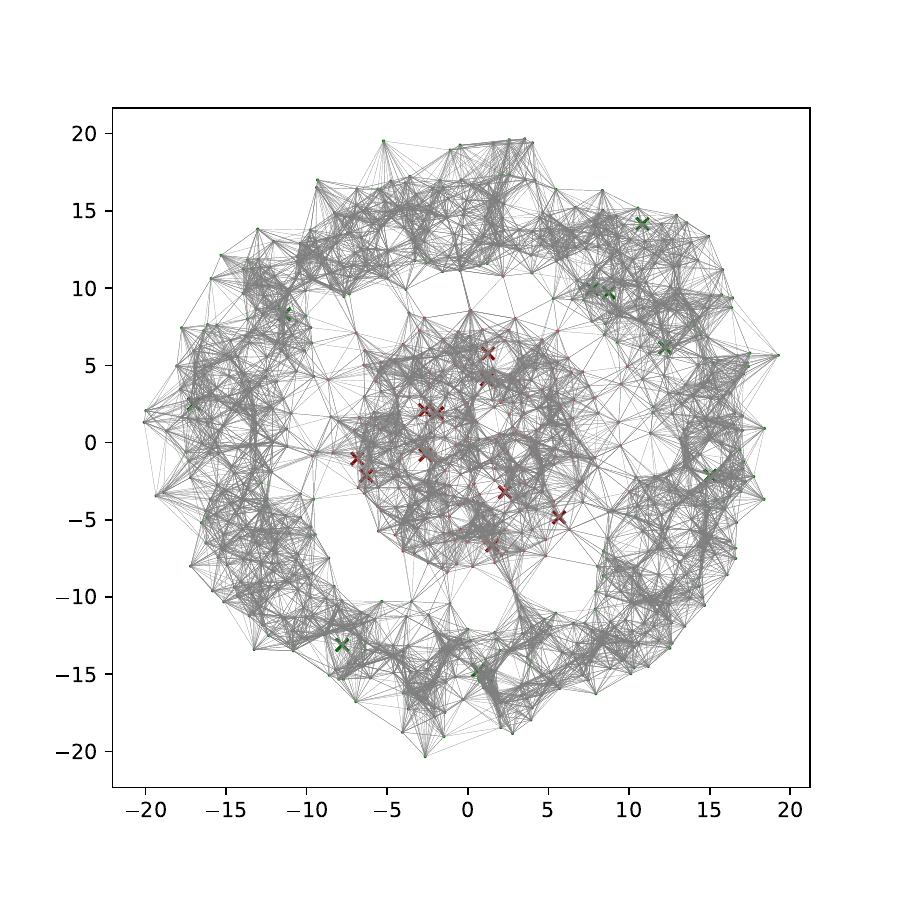}
}
\subfigure[Original GF]{
\includegraphics[width=0.23\textwidth]{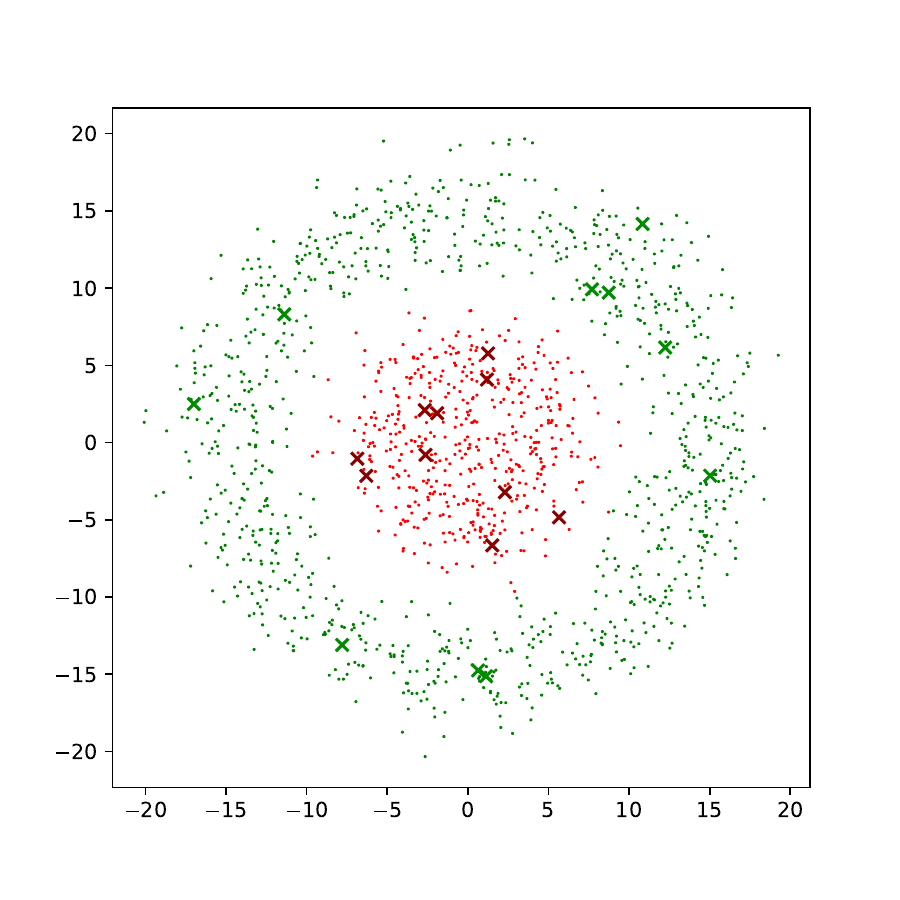}
}
\subfigure[Improved GF]{
\includegraphics[width=0.23\textwidth]{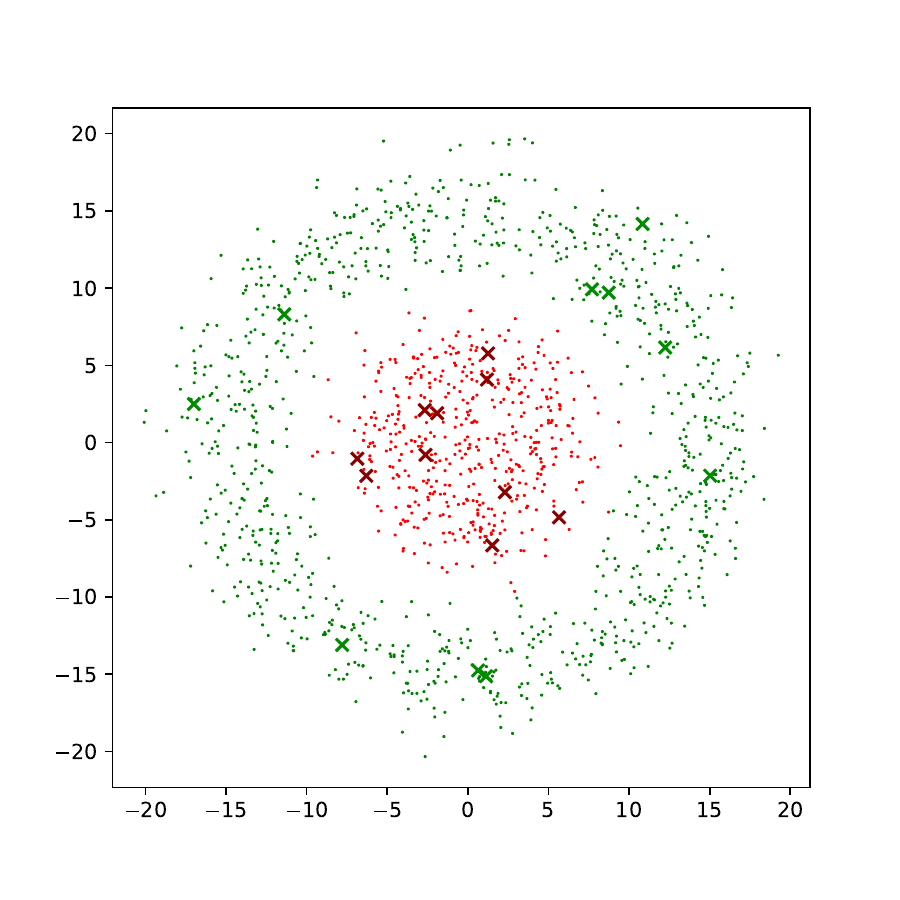}
}
\caption{Two-Ring Dataset Which is Fully Connected.}
\label{fig:Two-Ring-2}
\end{figure*}

\noindent We use two toy datasets to show the irrationality of applying the original Green-function method to a non-fully graph. They are shown as Fig. \ref{fig:Two-Ring} and Fig. \ref{fig:Two-Ring-2}. For both, the inner and outer rings respectively consist of 500 positive samples (the red points) and 1000 negative samples (the green points) but the density of rings differ. We set $k$ and $l$ as 20.

For each group of pictures, (a) describes the ground truth. In (b), a line is drawn between each pair of points whose similarity is not zero to show the connecting blocks. (c) is the result of the original GF and (d) is the result of the improved GF with perturbation $\mu=1\r{e}-5$. The forks stand for the randomly selected labeled samples.

In Fig. \ref{fig:Two-Ring}, the constraint $\b F\trans \b U_\perp = \b O$ of the original GF results in that each independent connecting block is divided into all classes, leading to terrible results. For a binary classification like the Two-Ring dataset, the sum of indicators is zero for each ring, so each of them has a positive part and a negative part. On the other hand, improved GF introduces a small perturbation to connect all points, and then performs well in the experiments.

For usual methods, it is harder to classify the dataset in Fig. \ref{fig:Two-Ring-2} than that in Fig. \ref{fig:Two-Ring} because of the more sparse rings. However, the two methods can both achieve good results in Fig. \ref{fig:Two-Ring-2} while the original method performs bad in Fig. \ref{fig:Two-Ring}, which verifies \t{Conclusion 3}. Besides, the situation like Fig. \ref{fig:Two-Ring} occurs frequently. Sparse graphs tend to exhibit high quality because of much less spurious connections\cite{Zhu2005SemiSupervisedLL}, so the number of nearest neighbors $k$ won't be very large. Therefore, it makes the graph easily become non-fully connected, especially for large-sized datasets with high dimensions and thus causes a mess.

% Label Margin
\subsection{Experiments to Validate the Label Margins}

\noindent We use the 6 real-world datasets to verify Conclusion 4 which tells that the Label Margin constraint creates a larger gap between the positive and negative samples than the Consistency constraint. We propose a simple but efficient metric:
\eqta
\alna
&\r{LM}_l = \frac{1}{c} \sum_{j=1}^c (\frac{\sum_{i=1}^l I(y_i = c)F_{ij}}{\sum_{i=1}^l I(y_i = c)} - \frac{\sum_{i=1}^l I(y_i \neq c)F_{ij}}{\sum_{i=1}^l I(y_i \neq c)}), \\
&\r{LM}_u = \frac{1}{c} \sum_{j=1}^c (\frac{\sum_{i=l+1}^n I(\tilde y_i = c)F_{ij}}{\sum_{i=l+1}^n I(\tilde y_i = c)} - \frac{\sum_{i=l+1}^n I(\tilde y_i \neq c)F_{ij}}{\sum_{i=l+1}^n I(\tilde y_i \neq c)}),
\alnb
\eqtb
where $I(y_i = c)$ equals 1 if $y_i = c$ is true and 0 if $y_i = c$ is false, and $I(y_i \neq c)$ is the opposite. Unlabeled samples have unknown $y$, so we use $\tilde y$ to express their real class.

This metric is to calculate the difference between the respective indicators of the positive and negative samples. The larger the metric, the better the discrimination. As is shown in TABLE \ref{t:margin}, $\r{LM}_l$ is the Label Margin among labeled samples, and $\r{LM}_u$ is that among unlabeled samples. The parameters of LLGC and GF(G) are the same as those described in Section \ref{ex:acc} and the initial $\b Y$ is coded as $\b Y^{(1)}$ in Eq.(\ref{eq:y1}). From the table, we could find that our proposed method GF(G) always has larger Label Margins among labeled samples than LLGC. As a result, the Label Margins among unlabeled samples of GF(G) also prevail and are consistent with the result in TABLE \ref{t:acc} and \ref{t:F1}. The performance of GF(G) is consistent with that of the original GF, so we only draw the curve of one of them.
 	
\begin{table}[t]
\centering
\caption{Label Margin Amoung Labeled Smaples and Unlabeled Samples.} \label{t:margin}
\begin{center}
\begin{tabular}{ccccccc}
\hline
\multirow{2}{*}{\t{Dataset}}&\multicolumn{3}{c}{$\b{LM_l}$}&\multicolumn{3}{c}{$\b{LM_u}$}\\ \cline{2-7}
 &\t{LLGC}&~&\t{GF(G)}&\t{LLGC}&~&\t{GF(G)}\\ \hline
Balance&1.678&$<$&13.2357&0.0049&$<$&1.3686\\
MobileKSD&0.0693&$<$&0.0776&0.0129&$<$&0.0189\\
USPS&0.0651&$<$&0.0775&0.0077&$<$&0.0174\\
CsMap&0.0685&$<$&0.0845&0.0051&$<$&0.0118\\
PhishingWeb&0.4754&$<$&1.7166&0.001&$<$&0.0464\\
Swarm&0.0822&$<$&0.208&0.0045&$<$&0.0482\\\hline
\end{tabular}
\end{center}
\end{table}

%Anchor
\subsection{Experiments About Anchors and Computing Time} \label{ex:anchors}

\begin{figure*}[b]
\centering
\subfigure[Balance]{
\includegraphics[width=0.3\textwidth]{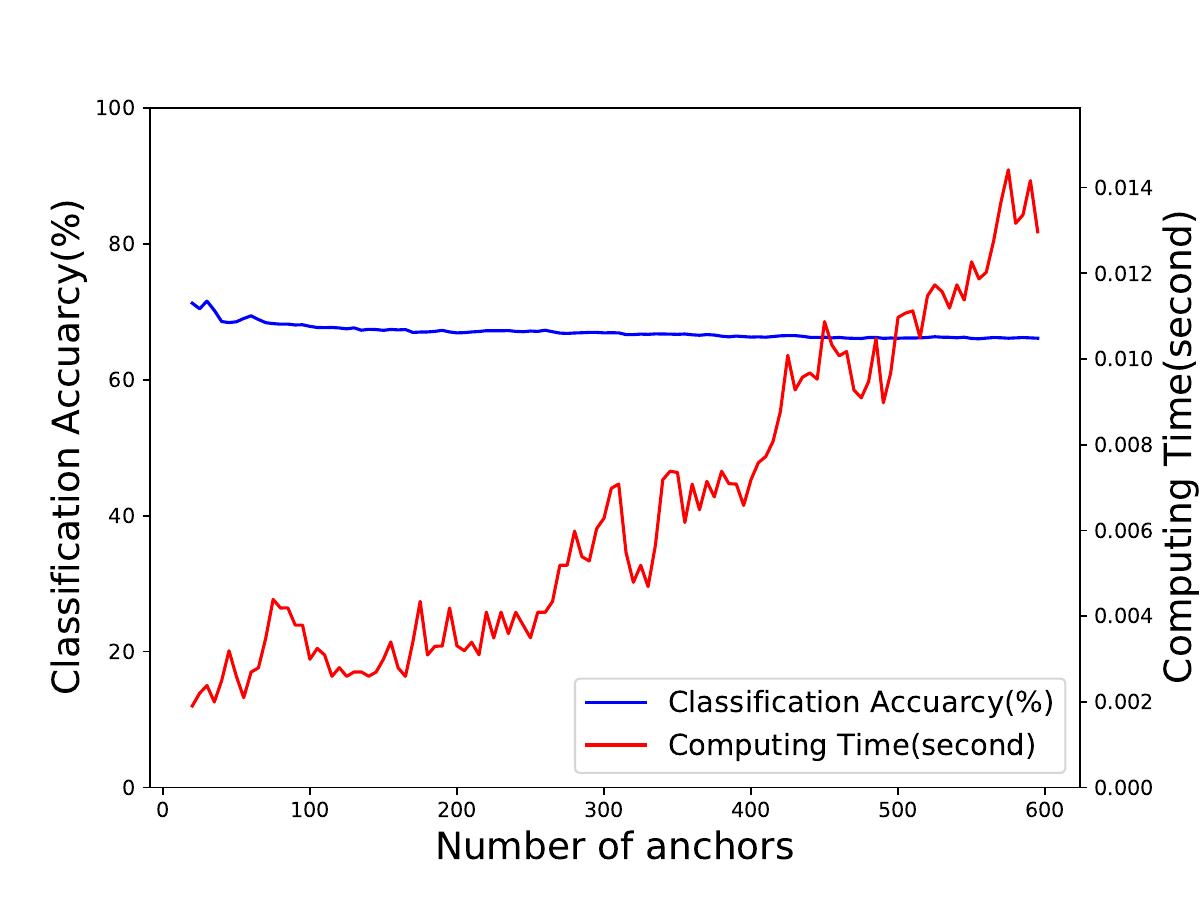}
}
\subfigure[MobileKSD]{
\includegraphics[width=0.3\textwidth]{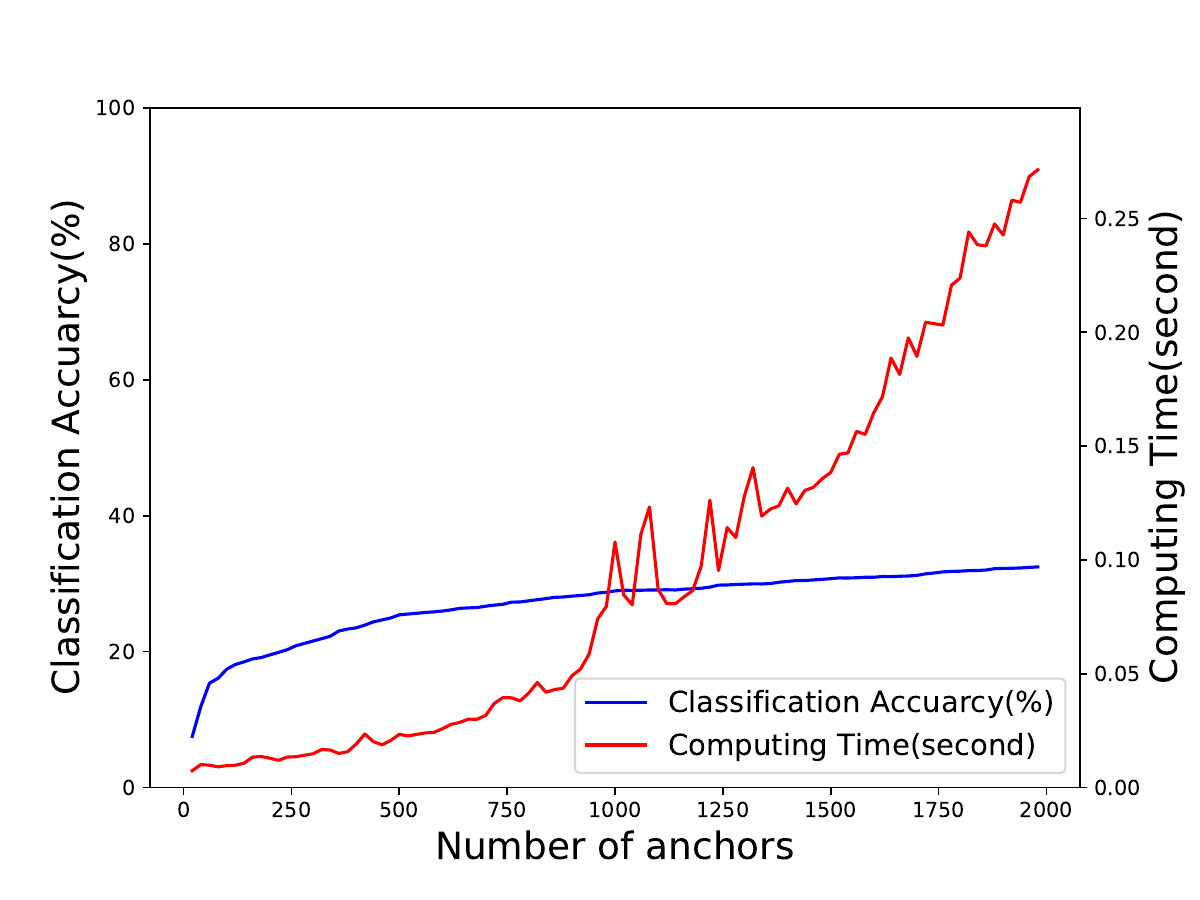}
}
\subfigure[USPS]{
\includegraphics[width=0.3\textwidth]{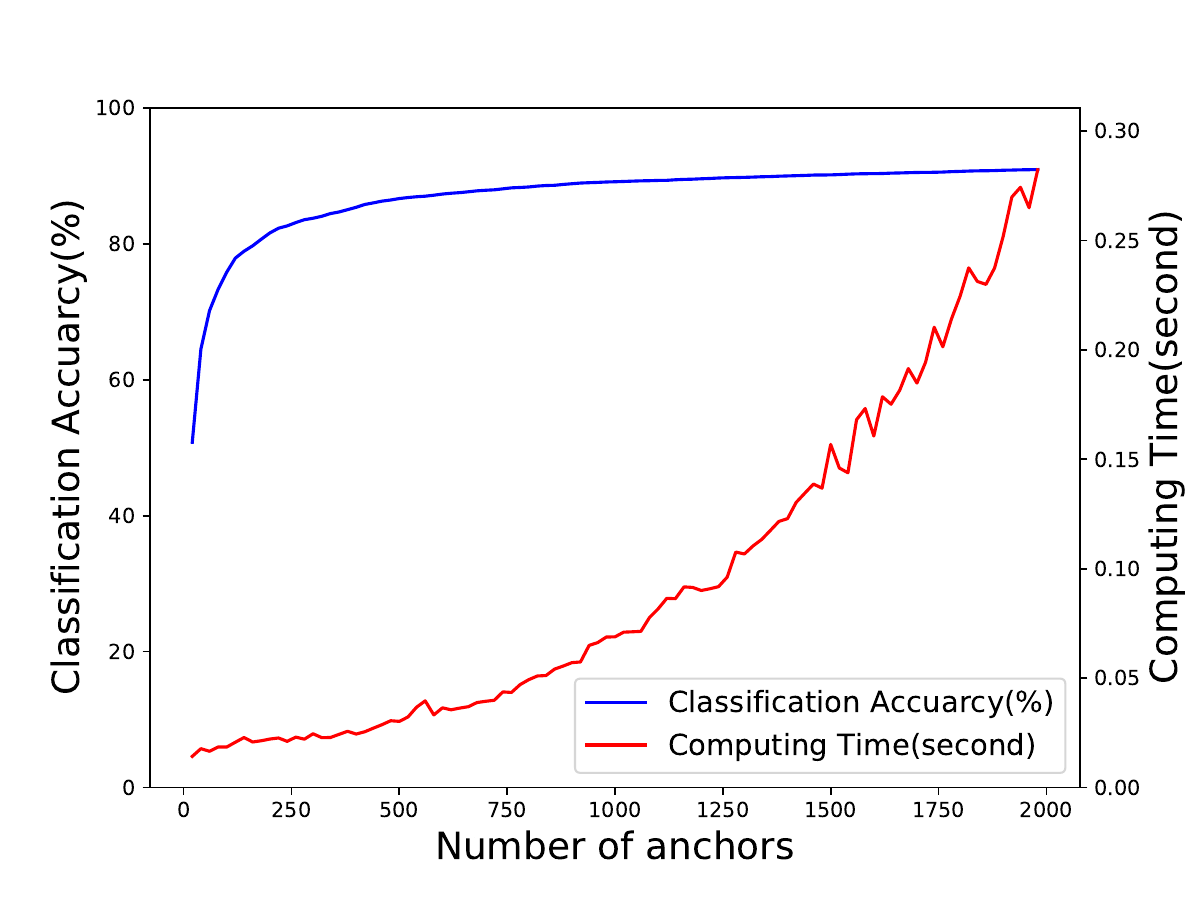}
}
\subfigure[CsMap]{
\includegraphics[width=0.3\textwidth]{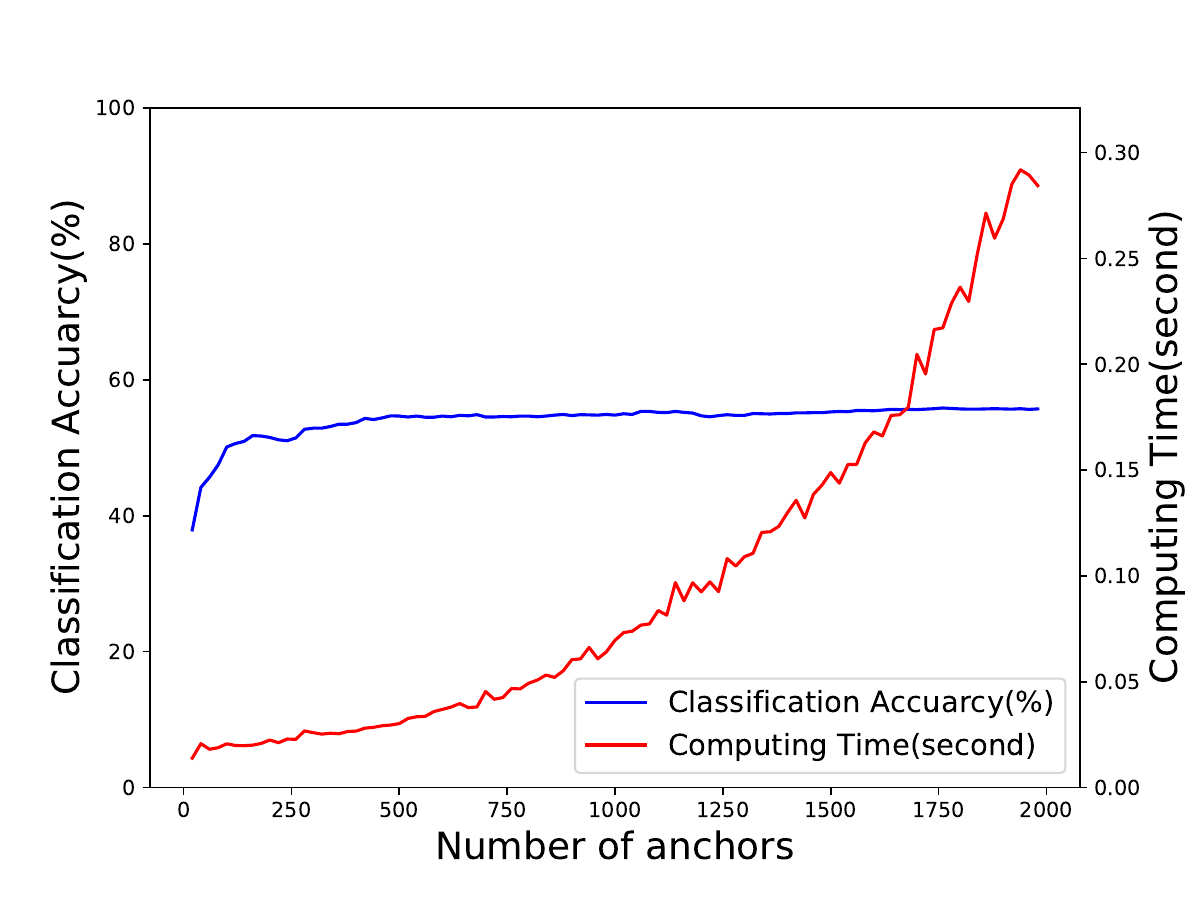}
}
\subfigure[PhishingWeb]{
\includegraphics[width=0.3\textwidth]{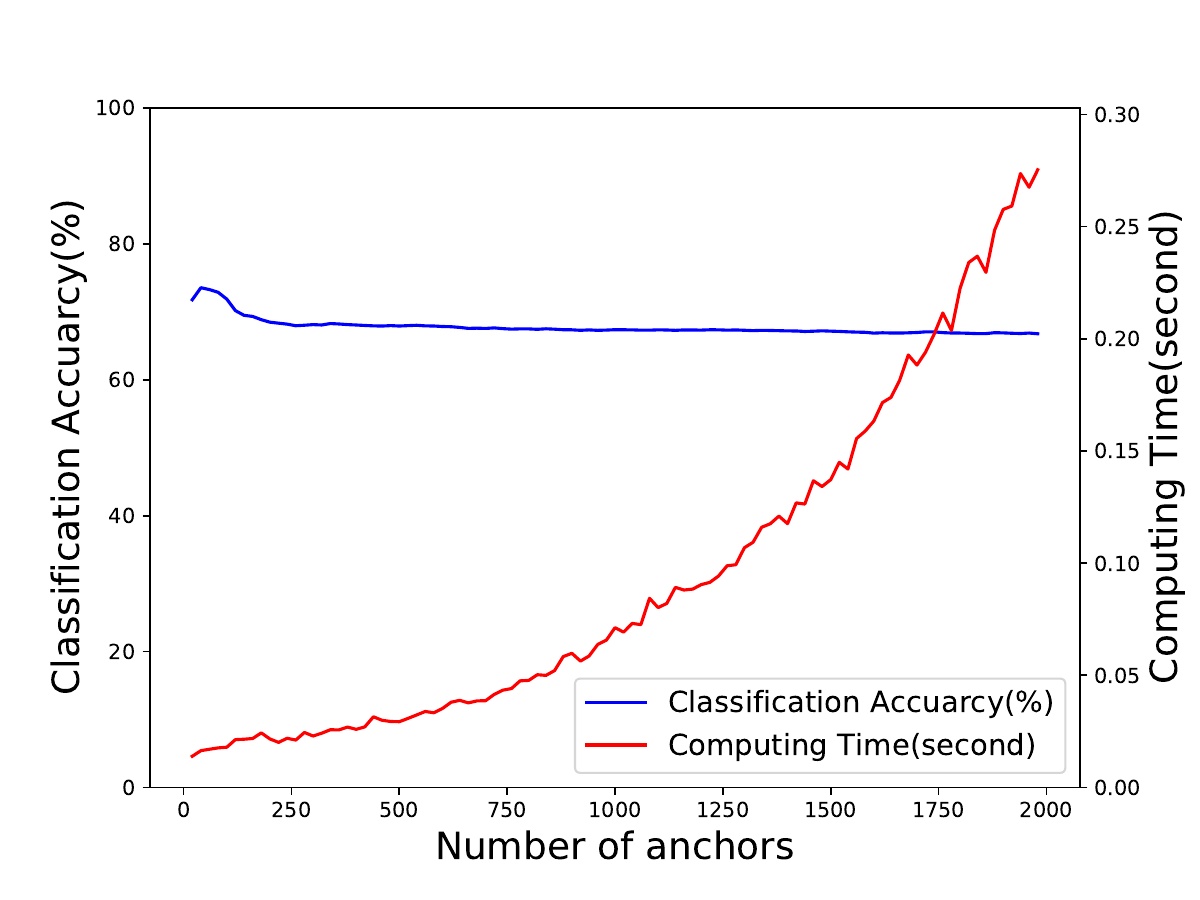}
}
\subfigure[Swarm]{
\includegraphics[width=0.3\textwidth]{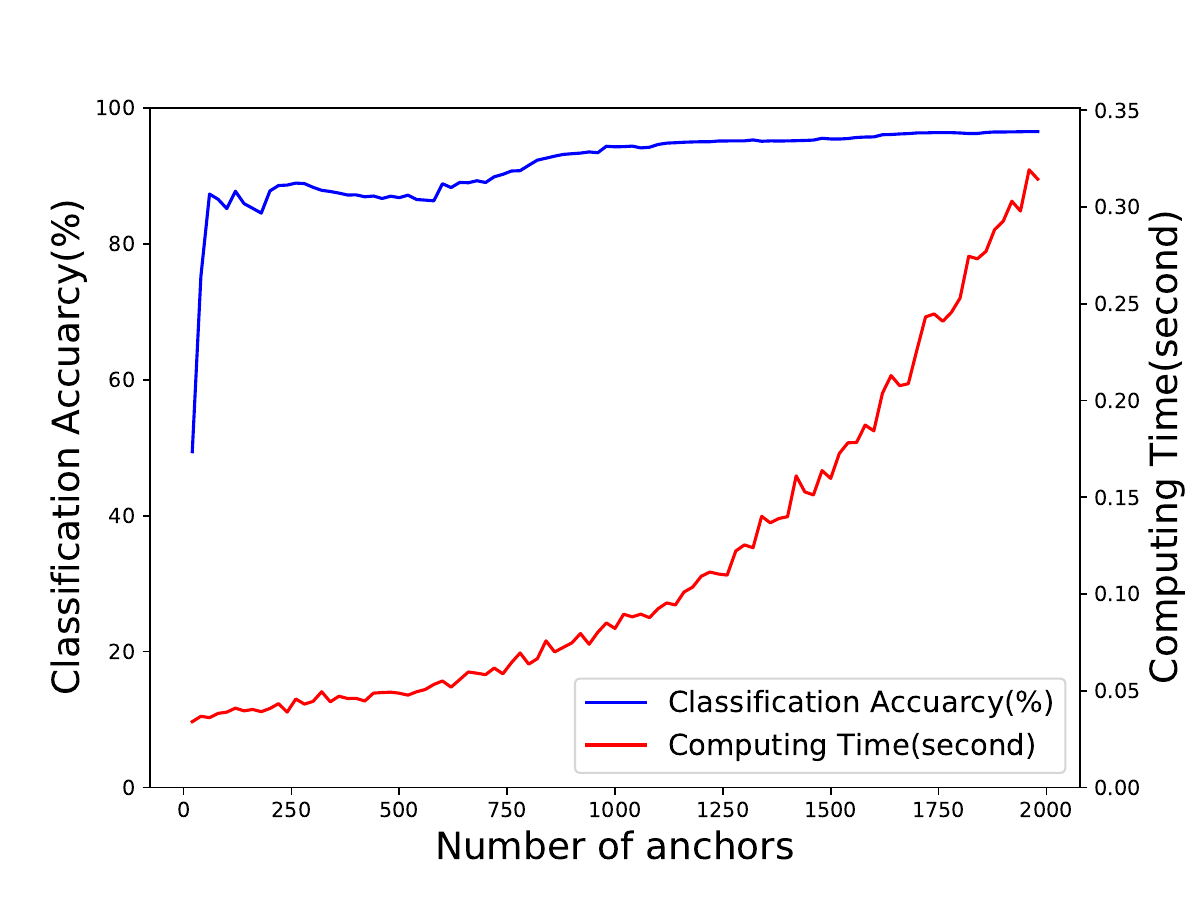}
}
\caption{Average classification accuracy and time cost versus the number of anchor points.}
\label{fig:acc-anchor}
\end{figure*}

\begin{figure*}[t]
\centering
\subfigure[MNIST]{
\includegraphics[width=0.3\textwidth]{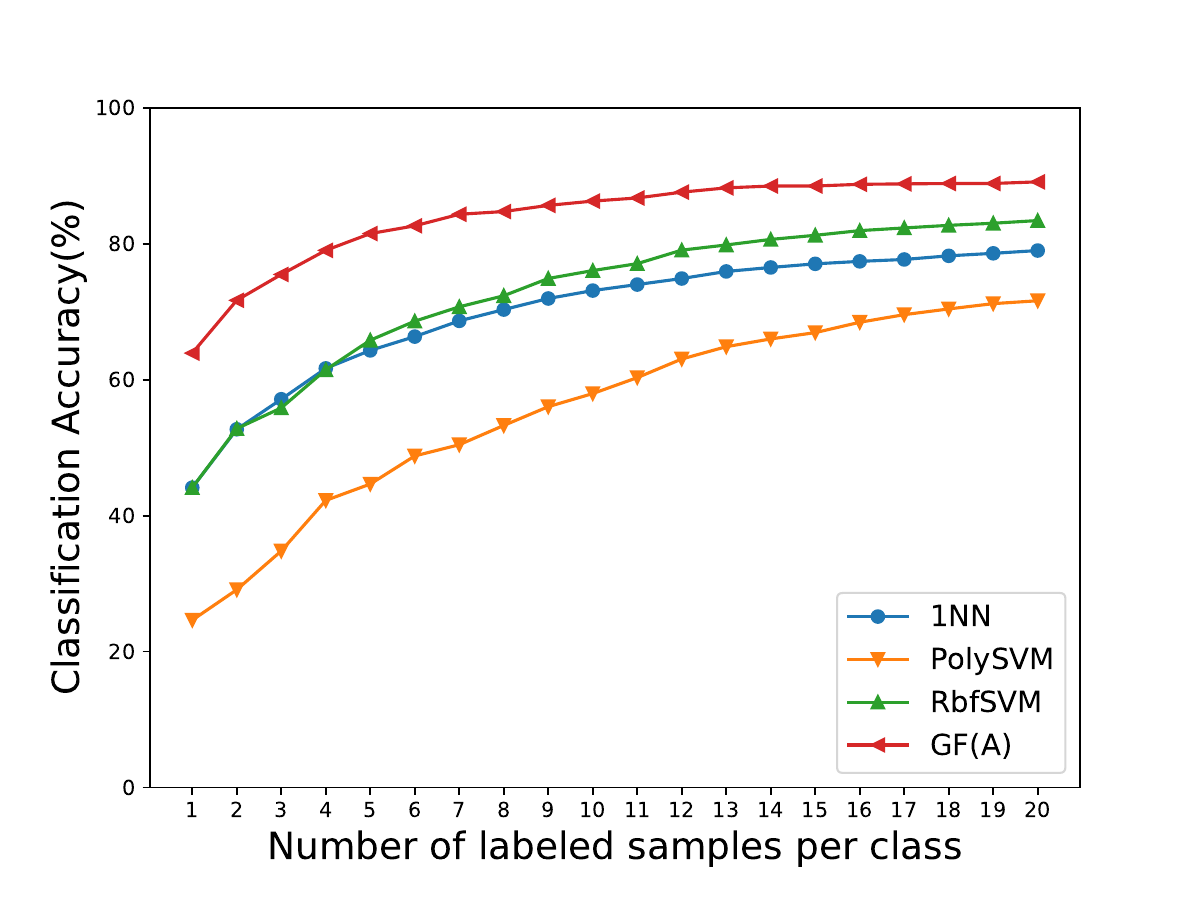}
}
\subfigure[USPS]{
\includegraphics[width=0.3\textwidth]{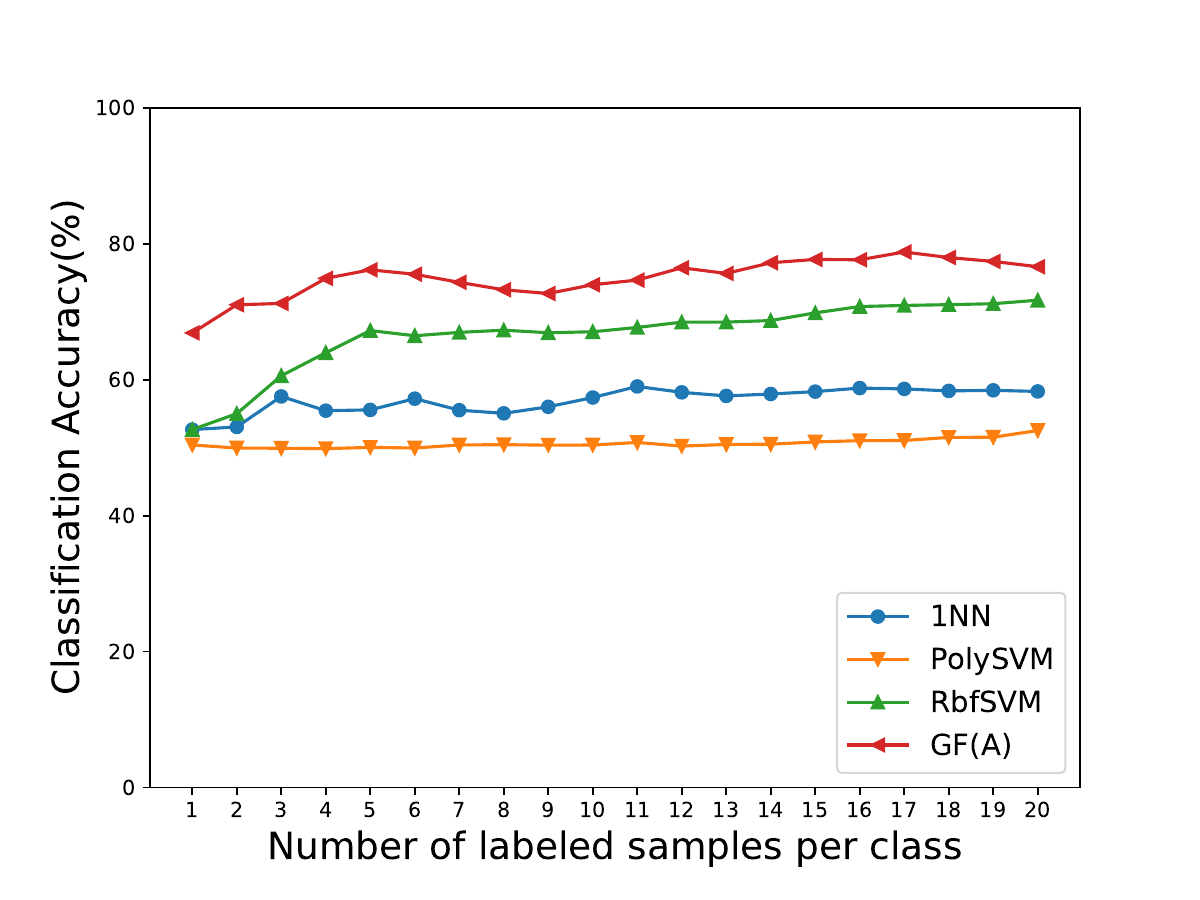}
}
\subfigure[EMNIST]{
\includegraphics[width=0.3\textwidth]{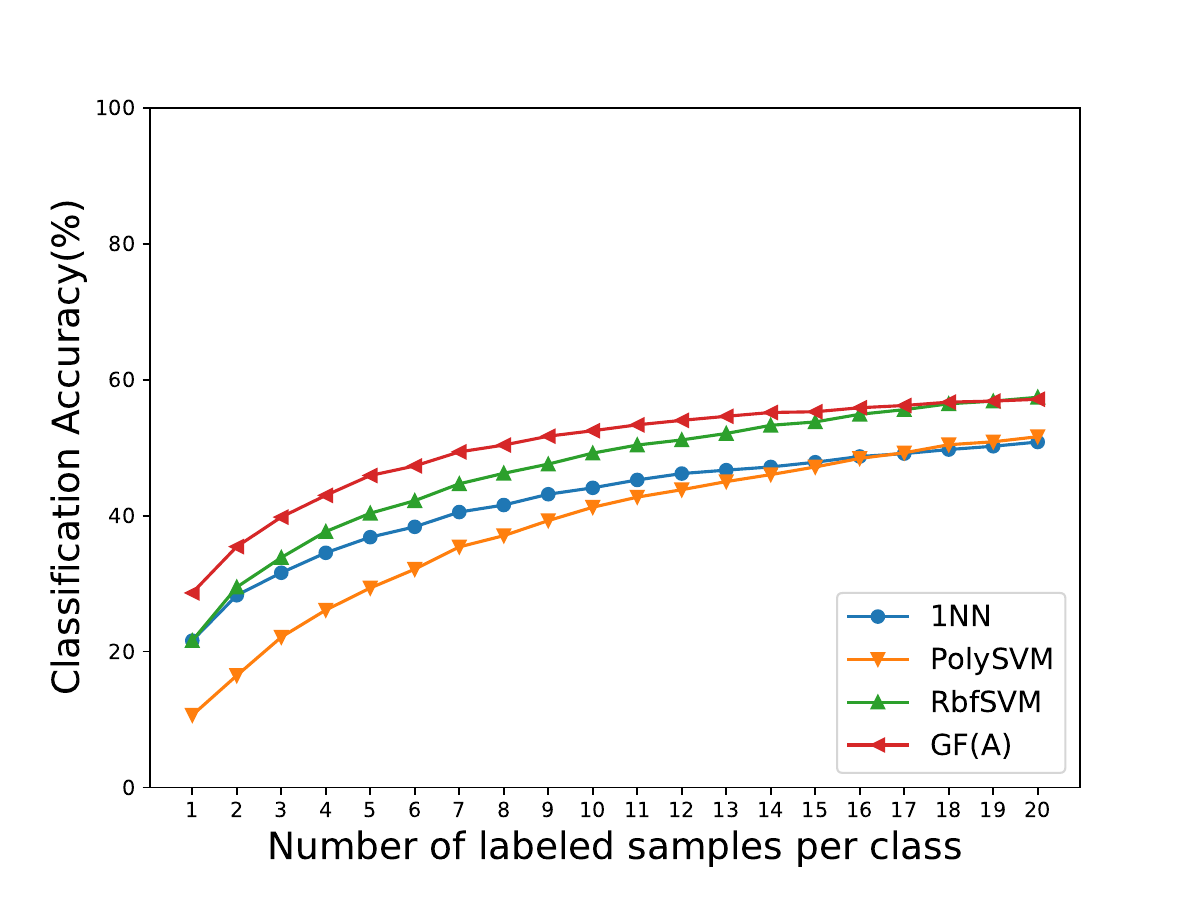}
}
\caption{Average classification accuracy versus the number of labeled samples per class.}
\label{fig:acc-label-large}
\end{figure*}

\noindent For an anchor-based method like GF(A), the selection of the number of anchor points may affect the effect. As is shown in Fig. \ref{fig:acc-anchor}, curves of the average accuracy and the computing time are drawn. The blue curves describe the trend of average accuracy versus the number of anchor points and the red curves describe that of average computing time versus the same. There are 5 trials for each point to prevent randomness due to the selection of 10 labeled samples for each class.

The time of constructing anchor-based graphs is not included, so the time complexity is strictly $O(nm^2)$. There's an obvious quadratic relation between the computing time and the number of anchors, which is consistent with our theoretical time complexity.  

When the number of anchors increases, performance increases rapidly initially and slower or even decreases later. Anchor points are used to represent the whole distribution and the number of classes usually affects the number of clusters in the distribution. Verifying this, the best performance always appears when the number of anchors is commensurate with that of classes. For the smallest-sized dataset Balance with 3 classes, fewer anchor points can represent all the samples so the best is achieved when there are as fewer anchor points as possible. At the same time, the number has to be larger than the nearest neighbor in our constructing graphs. So we set the number as 21. For datasets with 2 classes like Phishing Web, it is validated that using about 50 anchor points is a good choice. For datasets with more classes like USPS, CsMap, and Swarm, more anchor points are needed. We choose 1024 as the number of anchor points in the above-mentioned experiments due to its convenience for the recursion of BKHK. For datasets with 56 classes like MobileKSD, we use as many anchor points as possible.

% More
\subsection{Experiments on Large Graphs.}
\noindent When dealing with large-sized datasets, GF, GF(G), and other semi-supervised algorithms(LLGC, HF) encounter out-of-memory errors while GF(A) achieves good performance. To evaluate GF(A), we introduce 3 larger-sized real-world datasets:

1) MNIST\cite{LeCun2005TheMD}. It is a digit dataset that contains 70,000 28×28 pixel grayscale samples of handwritten digits. It covers 10 classes and each class has 7,000 images.

2) SensIT\cite{Kaggle2023}. It is a dataset that contains data from sensors installed in vehicles and the data includes information about the vehicle’s speed, acceleration, and location. It’s a large-sized dataset containing 98,528 100-dimensional samples which need to be determined whether a fault exists.

3) EMNIST\cite{Cohen2017EMNISTEM}. It contains 814,255 handwritten characters converted to 28 × 28 pixel image format. We use the data organization of “bymerge” because this form of the dataset contains all the samples. Some letters of which the lowercase and uppercase are so similar that they are respectively merged into one class. After that, there are 47 classes in it.

We choose three classical supervised algorithms to compare with GF(A):

1) 1NN. With the nearest neighbor method, each unlabeled sample tends to have the label the same as that of the nearest labeled sample.

2) PolySVM\cite{Suykens1999LeastSS}. We use polynomial function as the kernel in the Support Vector Machine and employ the one-against-the-rest strategy in multi-class classification.

3) RbfSVM. It is the same as PolySVM but uses radial basis function as the kernel instead.

\begin{table}[t]
\centering
\caption{The Description of Three Large Datasets} \label{t:dataset-large}
\begin{center}
\begin{tabular}{ccccc}
\hline
\t{Dataset} & \t{Samples} & \t{Classes} & \t{Dimensions} & \t{Anchors} \\ 
\hline
MNIST & 70,000 & 10 & 784 & 1,024\\
SensIT & 98,528 & 2 & 100 & 50\\
EMNIST & 814,255 & 47 & 784 & 2,048 \\ \hline
\end{tabular}
\end{center}
\end{table}

\begin{table}[t]
\centering
\caption{Accuracy(\%) ± Standard Deviation(\%) of Different Approaches on Six Datasets} \label{t:acc-large}
\begin{center}
\begin{tabular}{ccccc}
\hline
\t{Dataset}&\t{1NN}&\t{PolySVM}&\t{RbfSVM}&\t{GF(A)}\\ \hline
MNIST&73.13±0.61&57.97±3.00&76.07±1.17&\t{86.30±1.09}\\
SensIT&57.39±4.77&50.40±0.55&67.07±3.20&\t{74.30±4.88}\\
EMNIST&44.10±1.32&41.24±1.02&49.21±1.15&\t{50.25±1.66}\\\hline
\end{tabular}
\end{center}
\end{table}

\begin{table}[t]
\centering
\caption{F1-macro of Different Approaches on Six Datasets} \label{t:F1-large}
\begin{center}
\begin{tabular}{ccccc}
\hline
\t{Dataset}&\t{1NN}&\t{PolySVM}&\t{RbfSVM}&\t{GF(A)}\\ \hline
MNIST&0.7348&0.6518&0.7636&\t{0.8622}\\
SensIT&0.6031&0.5592&0.6894&\t{0.7472}\\
EMNIST&0.4078&0.4128&0.4573&\t{0.4583}\\\hline
\end{tabular}
\end{center}
\end{table}

The conditions are the same as those in Section \ref{ex:acc}. According to the law mentioned in Section \ref{ex:anchors}, the number of anchor points for GF(A) are shown in TABLE \ref{t:dataset-large}. As shown in TABLE \ref{t:acc-large}, \ref{t:F1-large} and Fig. \ref{fig:acc-label-large}, GF(A) performs better compared with these supervised methods when used on large-sized datasets.

%Conclusion
\section{Conclusions}
\noindent The Green-function method is a classical method in graph semi-supervised learning. However, its explanation is always the analogy as a whole and lacks interpretability from the perspective of optimization. We give a novel interpretation and the physical meanings by theoretical analysis in the former part of the paper. To avoid the disastrous consequence when using the Green-function method on non-fully graphs, an adjustment is applied to it and we propose the improved Green-function method. Based on the theoretical derivation, we could apply Gauss Elimination or Anchored Graphs to accelerate and decrease the space needed. At last, the extensive experiments prove our conclusions and the efficiency, accuracy, and stability of our proposed approach compared with the original Green-function method and other classified approaches.

%References
\bibliographystyle{IEEEtran}
\bibliography{citation}

\begin{IEEEbiography}[{\includegraphics[width=1in,height=1.25in,clip,keepaspectratio]{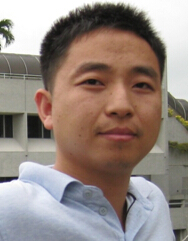}}]{Feiping Nie}
Feiping Nie received the Ph.D. degree in Computer Science from Tsinghua University, China in 2009, and is currently a full professor in Northwestern Polytechnical University, China. His research interests are machine learning and its applications, such as pattern recognition, data mining, computer vision, image processing and information retrieval. He has published more than 100 papers in the following journals and conferences: TPAMI, IJCV, TIP, TNNLS, TKDE, ICML, NIPS, KDD, IJCAI, AAAI, ICCV, CVPR, ACM MM. His papers have been cited more than 20000 times and the H-index is 99. He is now serving as Associate Editor or PC member for several prestigious journals and conferences in the related fields.
\end{IEEEbiography}

\begin{IEEEbiography}[{\includegraphics[width=1in,height=1.25in,clip,keepaspectratio]{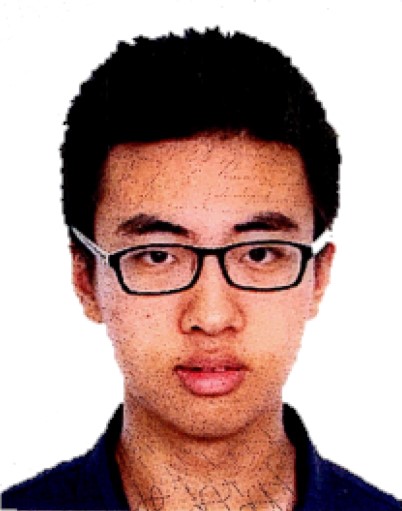}}]{Yitao Song}
Yitao Song received the B.S. from Northwestern Polytechnical University, Xi’an, China, in 2020. He is currently pursuing the Ph.D. degree with the School of Computer Science and the School of Artificial Intelligence, Optics and Electronics (iOPEN), Northwestern Polytechnical University, Xi’an, China.
His research interests include machine learning and data mining.
\end{IEEEbiography}

\begin{IEEEbiographynophoto}{Wei Chang}
Wei Chang received the BE degree in statistics from Northwestern Polytechnical University, Xi’an, China, in 2017, where he is currently working toward the master’s degree in the Center for Optical Imagery Analysis and Learning, Northwestern Polytechnical University, Xi’an, China. 
His research interests include machine learning, computer vision, and pattern recognition.
\end{IEEEbiographynophoto}

\begin{IEEEbiography}[{\includegraphics[width=1in,height=1.25in,clip,keepaspectratio]{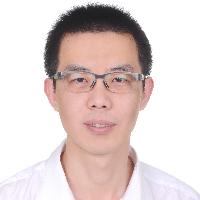}}]{Rong Wang}
Rong Wang is with the School of Artificial Intelligence, OPtics and ElectroNics (iOPEN), Northwestern Polytechnical University, Xi’an 710072, P.R. China, and also with the Key Laboratory of Intelligent Interaction and Applications (Northwestern Polytechnical University), Ministry of Industry and Information Technology, Xi’an 710072, P.R. China.
\end{IEEEbiography}

\begin{IEEEbiographynophoto}{Xuelong Li}
(Fellow, IEEE) is currently a Full Professor with the School of Artificial Intelligence, Optics and Electronics (iOPEN), Northwestern Polytechnical University, Xi’an, China.
\end{IEEEbiographynophoto}

\end{document}